\documentclass[twoside]{article}

%
\usepackage[accepted]{aistats2024}
%


\usepackage{tikz}
\usepackage[round]{natbib}
\usepackage[T1]{fontenc}    
\usepackage{hyperref}       
\usepackage{url}            
\usepackage{booktabs}       
\usepackage{amsfonts}       
\usepackage{nicefrac}       
\usepackage{microtype}      
\usepackage{xcolor}         
\usepackage{algorithm, algorithmic}

\usepackage{comment}
\usepackage{microtype}
\usepackage{graphicx}
\usepackage{caption}
\usepackage{subcaption}
\usepackage{wrapfig}
\usepackage[frozencache,cachedir=.]{minted}

\usepackage{amsmath}
\usepackage{amssymb}
\usepackage{mathtools}
\usepackage{amsthm}
\usepackage{dsfont}
\usepackage{multirow}
\usepackage{multicol}


\theoremstyle{plain}
\usepackage{tcolorbox}
\newtheorem{theorem}{Theorem}[section]

\newtheorem{remark}[theorem]{Remark}

\theoremstyle{definition}
\newtheorem{definition}[theorem]{Definition}

\newcommand{\mf}[1]{{\textcolor{black}{#1}}}
\newcommand{\nmf}[1]{{\textcolor{black}{#1}}}

\definecolor{tabblue}{HTML}{5778a4}

\definecolor{taborange}{HTML}{e49444}

\definecolor{tabgreen}{HTML}{6a9f58}

\newcommand{\mlp}{\mathrm{MLP}}

\setlength{\tabcolsep}{2pt}


\begin{document}

%
\runningtitle{Neural McKean-Vlasov Processes}
%

\twocolumn[

\aistatstitle{Neural McKean-Vlasov Processes: \\ Distributional Dependence in Diffusion Processes}

\aistatsauthor{ Haoming Yang$^*$ \And Ali Hasan$^{*\dagger}$ \And  Yuting Ng \And Vahid Tarokh}

\aistatsaddress{ Duke University } ]
\def\thefootnote{*}\footnotetext{Equal contribution, junior author listed first.}
\def\thefootnote{\arabic{footnote}}
\def\thefootnote{$\dagger$}\footnotetext{Correspondence: \href{mailto:ali.hasan@duke.edu}{ali.hasan@duke.edu}}\def\thefootnote{\arabic{footnote}}

\begin{abstract}
McKean-Vlasov stochastic differential equations (MV-SDEs) provide a mathematical description of the behavior of an infinite number of interacting particles by imposing a dependence on the particle density.
As such, we study the influence of explicitly including distributional information in the parameterization of the SDE.
We propose a series of semi-parametric methods for representing MV-SDEs, and corresponding estimators for inferring parameters from data based on the properties of the MV-SDE.
We analyze the characteristics of the different architectures and estimators, and consider their applicability in relevant machine learning problems.
We empirically compare the performance of the different architectures and estimators on real and synthetic datasets for time series and probabilistic modeling.
The results suggest that explicitly including distributional dependence in the parameterization of the SDE is effective in modeling temporal data with interaction under an exchangeability assumption while maintaining strong performance for standard It\^o-SDEs due to the richer class of probability flows associated with MV-SDEs. 
\end{abstract}

\section{Introduction}
\begin{figure}
\centering
     \begin{subfigure}{0.44\linewidth}
         \centering
     \includegraphics[width=\textwidth, trim=10pt 10pt 10pt 10pt, clip]{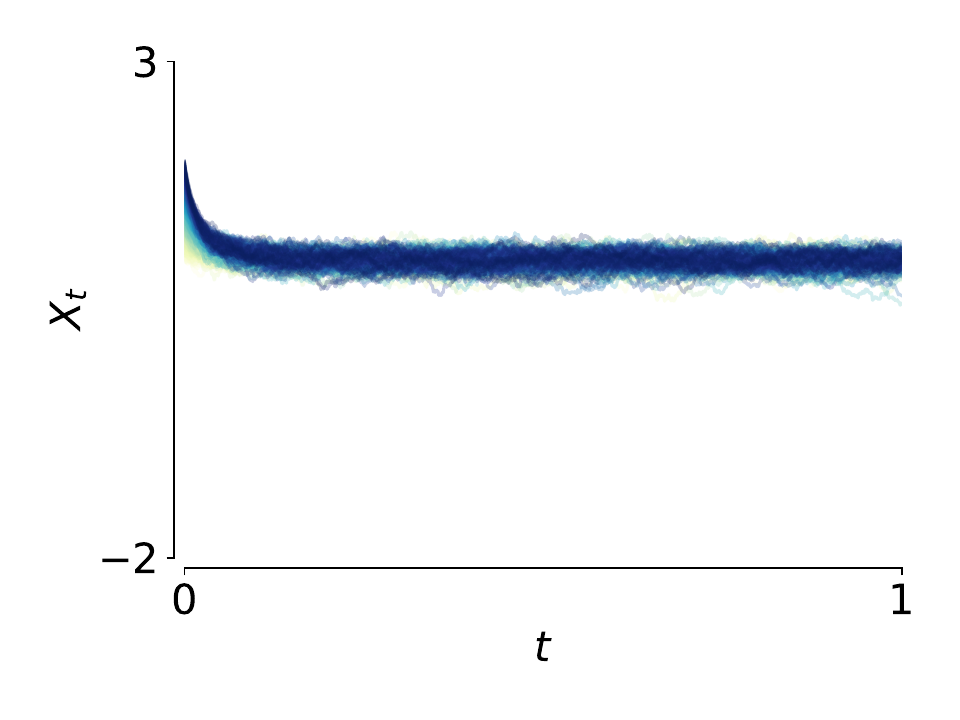}
         \caption{It\^o-SDE}
         \label{fig:simple_diffusion}
     \end{subfigure}
     \begin{subfigure}{0.44\linewidth}
         \centering
    \includegraphics[width=\textwidth, trim= 10pt 10pt 10pt 10pt, clip]{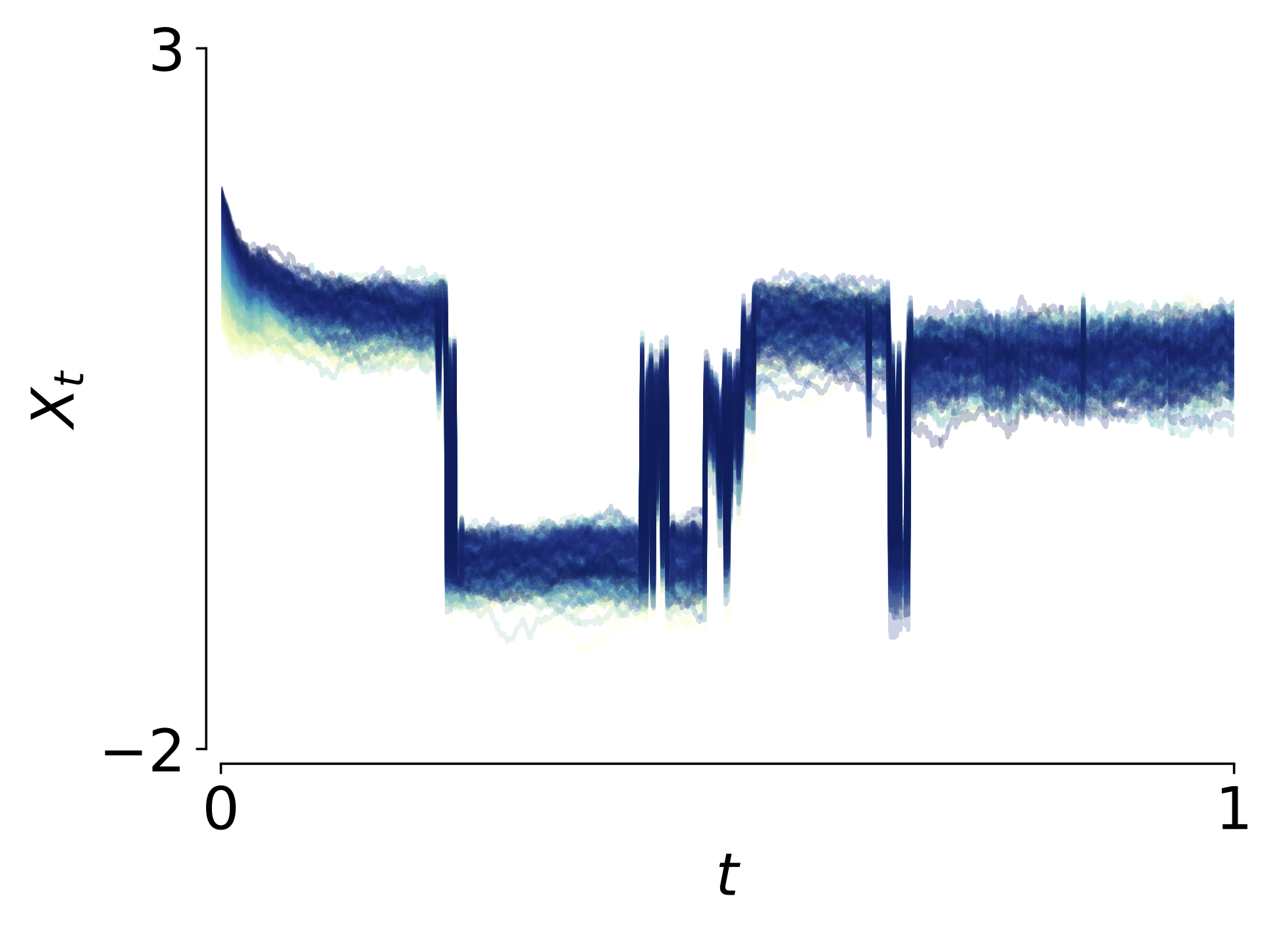}
         \caption{MV-SDE}
         \label{fig:mv_diffusion}
     \end{subfigure}
        \caption{SDE sample paths of a double-well potential, where the particles (a) do not interact and (b) exhibit complex phase transitions as a result only of interaction through weak attraction. 
        }
\label{fig:diffusion_compare}
\end{figure}
Stochastic differential equations (SDEs) model the evolution of a stochastic process through two functions known as the \emph{drift} and \emph{diffusion} functions. 
With It\^o-SDEs, individual sample paths are assumed to be independent, and neural representations of the drift and diffusion have achieved high performance in many applications such as in time series and generative modeling~\citep{song2020score, tashiro2021csdi}.

On the other hand, interacting particle systems are also used to model stochastic processes; they share many characteristics with It\^o-SDEs but additionally dictate an interaction between the different sample paths~\citep{liggett1997ips}.
This has been used as an interpretation of the dynamics of machine learning algorithms, most notably for the attention mechanism in the transformer architecture (e.g. in~\citep{sander2022sinkformers}).
When the number of particles approaches infinity, these processes generalize It\^o-SDEs to \emph{nonlinear SDEs} known as McKean-Vlasov SDEs (MV-SDEs). 
The nonlinearity arises from the individual particle's dependence on the whole particle density, often in the form of a \emph{mean-field} term represented by an expectation with respect to the particle density.
This distributional dependence allows for greater flexibility in the time marginal distributions that the MV-SDE can represent versus the It\^o-SDE. 
An example of the differences between the two frameworks is illustrated in Figure~\ref{fig:diffusion_compare} where  Figure~\ref{fig:simple_diffusion} depicts an It\^o-SDE where the sample paths are independent and Figure~\ref{fig:mv_diffusion} depicts a MV-SDE where the sample paths interact through distributional dependence. 
While these models appear in a variety of disciplines such as in finance~\citep{feinstein2021dynamic}, biology~\citep{Keller1971KSChemo}, and social sciences~\citep{carrillo2020long}, relatively few works have considered the application in machine learning tasks.
Moreover, representing such processes requires careful consideration, since the drift and diffusion parameterizations must now include \emph{distributional information} rather than just the state as inputs. 

This brings us to a motivating question:
\begin{tcolorbox}
\begin{center}
\unskip
\textit{(Q1) Can we develop neural architectures that satisfy the properties of MV-SDEs?}
\end{center}
\end{tcolorbox}
\noindent
To answer \emph{(Q1)}, we consider two ideas: (i) expressing a layer in a neural network as an expectation under a change of measure and (ii) using generative models to capture distributions and compute empirical expectations.   
Our second question is concerned with how the theoretical generality of MV-SDEs to It\^o-SDEs occurs in practice:
\begin{tcolorbox}
    \begin{center}
\unskip
\textit{(Q2) Does explicitly including distributional dependence improve modeling capabilities?}
\end{center}
\end{tcolorbox}
\noindent
This question is related to the initial insight that the MV-SDE can be viewed as a generalization of powerful architectures, such as the transformer, and we are interested in whether adding distributional dependence improves modeling capabilities.
We discuss a few theoretical properties that motivate this question and answer the question affirmatively by empirically comparing different architectures for applications in time series and in probabilistic modeling.

\subsection{Related work}
Estimating SDEs from data is an important component of many scientific disciplines with a number of methods developed specifically for this task, e.g.~\citet{chen2021solving, hasan2021identifying}.
Methods that estimate MV-SDEs from observations often assume known interaction kernels and drift parameters. 
They then rely on a large number of samples at regularly spaced time intervals to empirically approximate the expectation in the mean-field term~\citep{messenger2021learning,della2022nonparametric,yao22mean,della2023lan}.
In~\citet{pavliotis2022method}, the authors describe a method of moments estimator for the parameters of the MV-SDE. 
Other approaches concerned analyzing the partial differential equation (PDE) associated with MV-SDEs as in~\citet{gomes2019parameter}. 
In our work, we are primarily concerned with inference in regions where we have limited time-marginal data and the number of samples is not large.

In a machine learning context, It\^o-SDEs have been extensively used for problems such as generative modeling~\citep{song2020score, de2021diffusion}.
These models consider a forward SDE which maps data to noise and a reverse SDE which transforms noise to a target distribution, but the forward model notably does not include distributional dependence.
\citet{ansari2020refining} and~\citet{alvarez2021dataset} consider using MV-SDEs for improving generative models and domain adaptation tasks, respectively.
These models again consider applying known MV-SDEs rather than considering the inverse problem of recovering the parameters of the MV-SDE. 
Closely related to the our work,~\citet{pham2022mean} describe two types of neural network architectures that use binning and the empirical measure to represent the parameters of a MV-SDE and describe their performance on estimation tasks when given the mean-field term. 
However, their parameter estimation procedure requires knowledge of the original function being approximated through a known interaction kernel.
Additionally, none of the methods consider a nonparameteric approach to solving for an optimal drift that uses distributional dependence.
Other applications of MV-SDEs in machine learning topics include estimating optimal trajectories in scRNA-Seq data~\citep{chizat2022trajectory} and stochastic control problems relating to mean-field games~\citep{ruthotto2020machine, han2022learning}.
Inverse problems can also be solved by deriving an appropriate MV-SDE as the authors describe in~\citet{crucinio2022solving}.
Extensive analysis of the dynamics of the parameters of a neural network under stochastic gradient descent has been conducted using the theory from MV-SDEs, e.g.~\citep{hu2021mean}.
These methods use a pre-described form of the drift to conduct their analyses whereas we're interested in learning a representation of the drift.

\textbf{Our Contributions}
To address the lack of non-parametric MV-SDE estimators in the literature, this paper contributes the following:
First, we present two neural architectures for representing MV-SDEs based on learned measures and generative networks; then, we present two estimators, based on maximum likelihood, used in conjunction with the architectures without prior knowledge on the structure of the drift; next, we characterize the properties of implicit regularization and richer probability flows of these architectures; finally, we empirically demonstrate the applicability of the architectures on time series and generative modeling.

\section{Properties of MV-SDEs}
\label{sec:properties}
We begin by describing the background and properties of the transition densities of MV-SDEs. 
Figure~\ref{fig:properties} illustrates some of these concepts qualitatively where we first consider non-local dynamics and then consider jumps in the sample paths.

\subsection{Background}
Consider a domain $\mathcal{D} \subset \mathbb{R}^d$ and let $\mathcal{P}_k(\mathcal{D})$ be the space of all probability distributions supported on $\mathcal{D}$ with finite $k$th moment. 
Let $W_t \in \mathbb{R}^d$ be a $d$-dimensional Wiener process and let $X_t \in \mathbb{R}^d$ be a solution to the following MV-SDE
\begin{equation}
    \mathrm{d}X_t = b(X_t, p_t, t) \mathrm{d}t + \sqrt{\Sigma(X_t, p_t, t)} \mathrm{d} W_t
    \label{eq:mv_sde}
\end{equation}
where $p_t$ denotes the law of $X_t$ at time $t$ and $\sqrt{\Sigma}$ denotes the Cholesky decomposition of $\Sigma$.
We denote the \emph{drift} as $b : \mathbb{R}^d \times \mathcal{P}_2(\mathcal{D}) \times \mathbb{R}_+ \to \mathbb{R}^d$ and the \emph{diffusion} as $\Sigma : \mathbb{R}^d \times \mathcal{P}_2(\mathcal{D}) \times \mathbb{R}_+  \to \texttt{SPD}(\mathbb{R}^{d \times d})$ with \texttt{SPD} denoting the space of symmetric, positive definite matrices.

We focus on the case where the diffusion coefficient is a known constant, $\sigma$, and focus on estimating the drift, $b$, from data.
In addition, for simplicity in analysis, we suppose that $b$ factors linearly into a \nmf{non-interacting component}, and an \mf{interacting component}, where the mean-field term with dependence on $p_t$ is often written in terms of an expectation, specifically 
\begin{equation}
\mathrm{d}X_t = \nmf{f(X_t,t)}\mathrm{d}t + \mf{\mathbb{E}_{y_t\sim p_t}\left [\varphi\left(X_t,y_t\right)\right]}\mathrm{d}t + \sigma\mathrm{d}W_t
\label{eq:linear_drift}
\end{equation}
where $f:\mathbb{R}^d\times\mathbb{R}_+\to\mathbb{R}^d$ can be seen as the It\^o drift, the expectation as the mean-field drift, and $\varphi: \mathbb{R}^d\times \mathbb{R}^d \to \mathbb{R}^k$ as the \emph{interaction} function describing the interaction between particles, e.g. attraction with $\varphi(x,y)=-(x-y)$ in~Figure~\ref{fig:mv_diffusion} and the left side of Figure~\ref{fig:properties}.
This factorization is used in many practical models of interest, such as in bacterial chemotaxis, neural oscillators, and social networks~\citep{carrillo2020long}. 
Note that the proposed architectures do not assume this factorization, we use this factorization for ease in exposition.
As mentioned, unlike It\^o-SDEs which only depend on the state of the current particle $X_t$ and $t$, MV-SDEs additionally depend on the marginal time distribution of particles $p_t$. 
By introducing a dependence on the marginal law, the transition density of the process satisfies a richer class of functions.
To motivate this further, we next describe the properties of non-local dynamics and jumps.

\begin{figure}
    \centering
\includegraphics[width=0.23\textwidth, trim={15pt 15pt 15pt 15pt}]{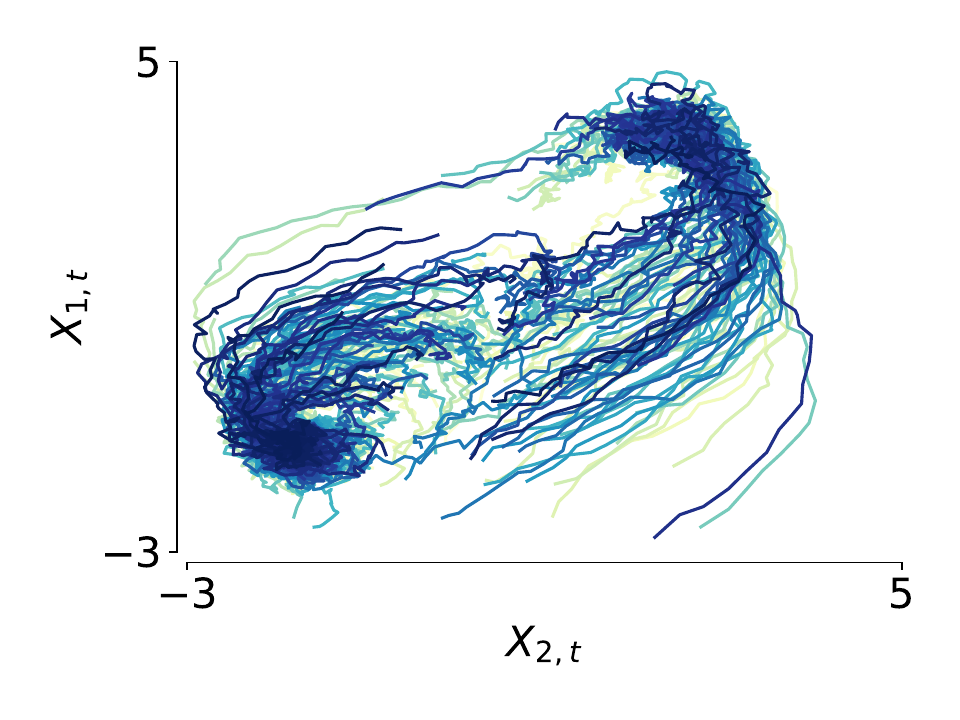}
\includegraphics[width=0.23\textwidth, trim={15pt 15pt 15pt 15pt}]{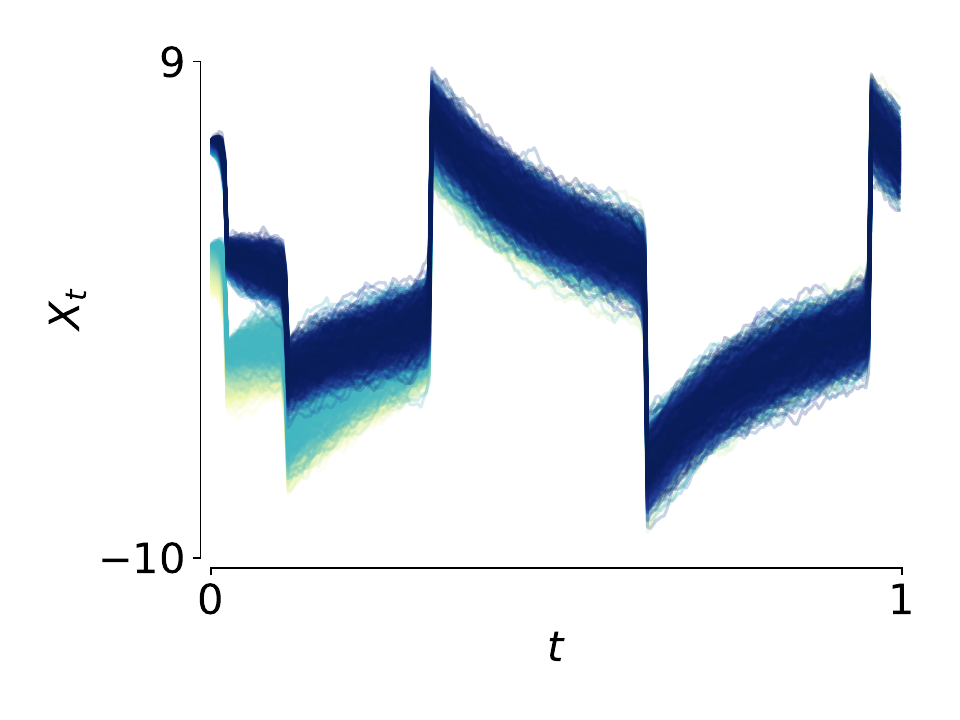}
    \caption{MV-SDE sample paths with non-local dynamics (left) and discontinuities (right).}
    \label{fig:properties}
\end{figure}
\subsection{Non-locality of the transition density}
Following the background, we describe a favorable property of the MV-SDE that induces non-local dependencies in the state space.  
The transition density of~\eqref{eq:linear_drift} can be written as the non-linear PDE
\begin{align}
  \nonumber & \partial_t p_t(x) =   \\ &- \nabla \cdot \left ( \underbrace{\phantom{\frac{}{}}p_t f(x)\phantom{\frac{}{}}}_{\text{It\^o Drift}} + \underbrace{p_t\int \varphi(x, y_t) p_t(y_t) \mathrm{d}y_t} _{\text{Non-Local Interactions}} - \underbrace{\frac{\sigma^2}{2} \nabla p_t}_{\text{Diffusion}} \right ).\label{eq:pde}
\end{align}
This non-local behavior has a variety of implications. For example, the distribution of particles ``far away'' from a reference particle can affect the behavior of the reference particle.
This property is illustrated in the left side of Figure~\ref{fig:properties} with an example from the mean-field FitzHugh-Nagumo model used to model spikes in neuron activation, leading to interactions between distinct spikes~\citep{crevat2019rigorous}.
Notably, this is not possible when considering only the It\^o drift, since that operator acts locally on the density.

\subsection{Discontinuous sample paths}
\label{sec:discSamplePaths}
The richer class of densities modeled by MV-SDEs has direct influence on individual sample paths.
In a modeling scenario, we may wish to approximate a process that exhibits jumps. 
For example, in finance, a number of related entities may have common exposure and experience failure simultaneously~\citep{nadtochiy2019particle, feinstein2021dynamic}.
Similarly, in neuroscience, a number of neurons spiking simultaneously results in discontinuities in the sample paths~\citep{carrillo2013classical}.
The fact that the interaction of many particles can cause blowups leads to a remarkable property of MV-SDEs that allows discontinuous paths.
The major benefit of this property is that we do not need to consider an additional jump noise process -- we only need to specify a particular interaction between the particles to induce the jump behavior.
We can then represent point processes through the discontinuous points of the sample path, which provides a framework of interpreting a point process through particle interactions. 
Similar techniques have been developed in the case of It\^o-SDEs, where the decomposition of individual sample paths leads to a point process structure and the sample paths can then be interpreted for downstream application~\citep{hasan2023inference}.
A simple proof for the case of positive feedback is given in~\citet[Theorem 1.1]{hambly2019mckean}.
Intuitively, the jumps occur due to the drift inducing a simultaneous force against the particles, leading to the discontinuity.

\section{Mean-field architectures}
\begin{figure*}
\centering
    \begin{tikzpicture}

\node at (1, 0) [rectangle,minimum size=2cm] {
  \shortstack{Implicit Measure (IM) \\ $\frac1{n}\sum_{i=1}^{n}\varphi(\,\cdot\,, W_0^{(i)})\frac{\mathrm{d}p_t}{\mathrm{d}p_0}$ \\ $\phantom{ \left\{\hat{X}_t^{(i)}\right\}_{i=1}^K \sim \hat{P}_t}$}
  };
  \node at (5.5, 0) [rectangle,minimum size=2cm] {
  \shortstack{Empirical Measure (EM) \\ $\frac1n\sum_{i=1}^{n}\varphi(\,\cdot\,, X_t^{(i)})$ \\ $ \left\{X_t^{(i)}\right\}_{i=1}^n \sim p_t$}
  };
 \node at (9.5, 0) [rectangle, minimum size=2cm] {
  \shortstack{ Marginal Law (ML) \\ 
  $\frac1{n}\sum_{i=1}^{n}\varphi(\,\cdot\,, \hat{X}_t^{(i)})$ \\  $\left\{\hat{X}_t^{(i)}\right\}_{i=1}^{n} \sim \hat{p}_t$ }
  };
\draw [stealth-stealth](-1.5,-1) -- (11.5,-1) node [pos=0.95,below,font=\footnotesize] {Explicit $p_t$ } node [pos=0.05,below,font=\footnotesize] {Implicit $p_t$};
\end{tikzpicture}
\caption{Schematic comparing neural architectures for modeling MV-SDEs. Implicit measure (IM) architecture uses a mean-field layer that represents particles as learned weights and computes the expectation under a learned change of measure; the empirical measure (EM) architecture computes the expectation with the observed particles; the marginal law (ML) architecture learns the particle density, and computes an empirical expectation with samples from the learned density.
}
\label{fig:arch}
\end{figure*}
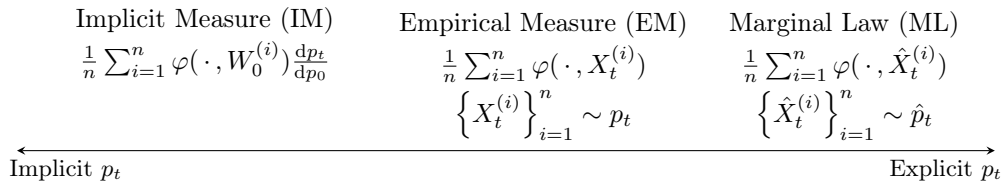

\label{sec:architectures}
We first consider a modification of the architecture proposed in~\citet{pham2022mean} that empirically computes the expectation using observations, and denote it as the empirical measure (EM) architecture. 
We then propose two architectures -- an architecture based on representing a learned measure with neural weights, denoted as the implicit measure (IM) architecture, and a generative architecture based on representing the marginal law of the samples, denoted as the marginal law (ML) architecture. 
Figure~\ref{fig:arch} provides a schematic of the mean-field representations with different architectures.
To motivate modeling the expectation in the mean-field term with the EM, IM and ML architectures, we first assume that the drift and diffusion coefficients of the MV-SDE in~\eqref{eq:mv_sde} exhibit sufficient regularity such that the empirical law converges to the true law of the system (i.e. $\frac1n \sum_{i=1}^n \delta_{X_t^{(i)}} \to_{n\to\infty} p_t(X_t)$), i.e. propagation of chaos holds~\citep{meleard1996asymptotic}.
We relax this assumption for the experiments since these are difficult to verify in practice.

\subsection{Empirical measure architecture}
\label{sec:xt}

Suppose we observe $n$ particles at each time $t$ given by $\{X_t^{(i)}\}_{i=1}^n$ and denote the discrete measure associated with these observations as $p_t^\delta = \frac1n \sum_{i=1}^n\delta_{X_t^{(i)}}$.
Then, we can use $p_t^{\delta}$ to approximate the expectation in~\eqref{eq:linear_drift} as
\begin{align}
    \nonumber \mathbb{E}_{y_t\sim p_t}\left [\varphi\left(X_t,y_t\right)\right] &\approx \mf{\mathbb{E}_{y_t \sim p_t^{\delta}}\left [ \varphi\left(X_t, y_t ; \theta \right) \right ]} \\ &= \frac{1}{n}\sum_{i=1}^n \varphi\left(X_t, X_t^{(i)} ; \theta \right)
\label{eq:xt}
\end{align}
for a neural network $\mf{\varphi(\cdot, \cdot; \theta)}$ describing the interaction function between the particles~\citep{pham2022mean}. 
Suppose the non-mean field component $f$ is also represented with a neural network $\nmf{f(\cdot, t; \theta)}$. 
Assuming that $\mf{\varphi}$ and $\nmf{f}$ are well learned, this architecture can represent the MV-SDE drift in the limit as the number of observations $n\to \infty$.
We refer to this architecture as the \emph{empirical measure} (EM) architecture since at each time step the expectation is taken with respect to the empirical measure derived from the observations.
We note that the empirical measure may be difficult to obtain at all time steps or may contain few samples, often a smoothing technique is necessary to overcome this issue. 

\subsection{Implicit measure architecture}
\label{sec:w0}
To empirically compute the expectation, the EM architecture in~\eqref{eq:xt} relies on obtaining the empirical measure at each time point.
This may be difficult in practice for a variety of reasons such as having few samples or the lack of data at some time points. Instead, in the implicit measure (IM) architecture, we represent particles as learned weights and compute the expectation under a learned change of measure.

Let us first recall that a single layer in a multilayer perceptron $(\mlp)$ can represent an expectation as
\begin{equation}
\mlp^{W,b}(x) = \int \sigma \left ( Wx + b \right ) \mathrm{d}\nu\left(W,b\right) 
\label{eq:mlp}
\end{equation}
where the expectation is taken over $\nu \left ( \cdot \right )$, a measure over the space of parameters $y = ( W, b)$, and $\sigma$ is an activation function. 

When $\nu=\frac{1}{n}\sum_{i=1}^n\delta_{y^{(i)}}$, a discrete measure with $n$ particles, the expectation is exactly a single layer of width $n$, suggesting a correspondence between an empirical measure with $n$ samples and a single layer of width $n$.
Building on this correspondence, we propose a mean-field layer:

\begin{definition}[Mean-field Layer]
\label{def:mf}
Define the weight of the mean-field layer with width $n$ as the matrix $W_0 \in \mathbb{R}^{n \times d}$ and denote its $i$th row as $W_0^{(i)}$.
The mean-field layer then is defined by the operation 
\begin{equation}
\mf{\mathrm{MF}_{(n)}(\varphi(X_t)) := \frac1n \sum_{i=1}^n \varphi(X_t, W_0^{(i)})\frac{\mathrm{d}p_t}{\mathrm{d}p_0}.}
\label{eq:mf_module}
\end{equation}
\end{definition}

 The mean-field layer (MF) can be thought of as another layer within the network architecture that approximates the law $p_t$. 
 Each row $W_0^{(i)}$ is of size $\mathbb{R}^d$, corresponding to the dimensions of $X_t^{(i)}\in\mathbb{R}^d$. 
 The activation function of the mean-field layer is the average over the augmented dimension over which MF operates.
 The change of measure $\frac{\mathrm{d}p_t}{\mathrm{d}p_0}$, such that $\mathbb{E}_{p_t}[\,\cdot\,]=\mathbb{E}_{p_0}[\,\cdot\, \frac{\mathrm{d}p_t}{\mathrm{d}p_0}]$, can be learned as part of the estimator of the interaction function, $\varphi(\cdot,\cdot,t;\theta)$. Thus
 \begin{equation}
\nonumber \mathbb{E}_{y_t\sim p_t}\left [\varphi\left(X_t,y_t\right)\right] \approx \frac1n \sum_{i=1}^n \varphi(X_t, W_0^{(i)},t;\theta).
\end{equation}
 Importantly, the above representation allows modeling mean-field interactions without the need for a full set of observations at each time point and without the need to explicitly represent the distribution $p_t$ at each time point. 
 Assuming that $\varphi$ and MF are well learned, this architecture can represent the true MV-SDE drift in the limit as the width $n\to \infty$. 
 We note empirically that a finite $n$ is sufficient and we provide examples of ablations in the appendix.

A similar analysis can be made for the standard MLP architecture.
However, the explicit separation of $f$ and $\varphi$ is not expressed and $\varphi$ may take a simpler convolutional form. 
This leads us to the following remark:
\begin{remark}\label{remark:itosde}
From the above discussion, the expectation of the form $\mathbb{E}_{y\sim p_t}[\varphi(X_t - y)]$ with respect to the law $p_t$ may be implicitly represented by a MLP. 
 \end{remark}

 Our motivation is then concerned with how a relatively more explicit distribution dependence with $\varphi$ and MF affect modeling capabilities. 
 This explicit structure lends to an implicit regularization that promotes a smaller norm of the mean-field component under a maximum likelihood estimation framework, which we detail later in Section~\ref{sec:implicit}.

\subsection{Marginal law architecture}
\label{sec:nf}

A solution to the MV-SDE is the pair $(X_t,p_t)$ such that $p_t=\text{Law}(X_t)$. 
For this reason, $p_t$ is often itself the main object of study. 
In the marginal law (ML) architecture, in conjunction with the drift, we introduce a generative model for representing the time-varying density.
In this case, we approximate the expectation in~\eqref{eq:linear_drift} as
\begin{align}
    \nonumber \mathbb{E}_{y_t\sim p_t}\left [\varphi\left(X_t,y_t\right)\right] &\approx \mf{\mathbb{E}_{y_t \sim \hat{p}_t}\left [ \varphi\left(X_t, y_t ; \theta \right) \right ]} \\ &= \frac{1}{n}\sum_{i=1}^n \varphi\left(X_t, \hat{X}_t^{(i)} ; \theta \right)
\label{eq:pt}
\end{align}
where the expectation is taken with respect to the discrete measure derived from samples $\{\hat{X}_t^{(i)}\}_{i=1}^n$ from the generative model $\hat{p}_t(\cdot;\phi)$.

In addition, we regularize $\hat{p}_t(\cdot;\phi)$ such that the marginal law is consistent with the flow relating to the drift, which we detail later in Section~\ref{sec:est_marginallaw} using the PDE in~\eqref{eq:pde}. 

\paragraph{A PDE representation of $p_t$} 
The transition density of the MV-SDE is known to satisfy the corresponding nonlinear Fokker-Planck equation under assumptions stated above~\citep{pavliotis2022method, belomestny2019iterative}.
If we consider the $X_t$ that satisfies the MV-SDE and the $p_t$ which is the law of the samples at time marginal $t$ with initial condition $p_0 = u(x) \in \mathcal{P}_2(\mathcal{D})$.
Then, $p_t$ is a solution of
\begin{equation}
\label{eq:nl_fp}
\partial_t p_t(x) = -\nabla \cdot \left (  B(x,p_t) p_t \right) + \frac12 \sigma \nabla^2 p_t, \quad p_0 = u(x).
\end{equation} 
Moreover,~\eqref{eq:nl_fp} has a solution given in terms of an expectation over sample paths $X_t$ as
\begin{equation}
\label{eq:fk}
p_t(x) = \mathbb{E}\left[e^{\int_0^t-\nabla \,\cdot \, B(x_s,p_{s}) \mathrm{d}s}p_0(X_t) \mid X_0 = x\right].
\end{equation}
As an aside, note that there are a number of relevant PDEs can be written in this form, e.g. the Burgers equation from fluid dynamics which is given by $\partial_t u_t = -u \partial_x u + \partial_{xx}u$ is obtained with $B \equiv \int \mathds{1}_{X_t - y > 0} p_t(y) \mathrm{d}y $~\citep{bossy1997stochastic}.
This stochastic representation lends itself to an efficient method for computing solutions to these PDEs in high dimensional settings.
Traditional solvers such as finite differences or finite elements do not scale to high dimensions, making an approach such as this appropriate for finding the solution of the PDE.

\section{Parameter estimation}
Having presented the neural architectures, we now present 
  estimators, based on maximum likelihood,  used in conjunction with the architectures without prior knowledge on the structure the drift. We first describe the likelihood function for use in cases with regularly sampled data. We then describe a bridge estimator for cases of irregularly sampled data. 
In addition, we describe an estimator for the generative architecture based on both the likelihood function and the transition density.
For this section, we assume that we observe multiple paths, i.e., $\left\{\{X_{t_j}\}_{j=1\ldots K}^{(i)}\right\}_{ i=1\ldots N}$. Full details of all algorithms are in the appendix.

\subsection{Maximum likelihood estimation}
We use an estimator based on the path-wise likelihood derived from Girsanov's theorem and an Euler-Maruyama discretization for the likelihood, considered in~\citet{sharrock2021parameter}. 
The likelihood function is given as
\begin{align}
   \nonumber \mathcal{L}(\theta; t_1, t_K) :=  \exp \bigg (&\frac{1}{\sigma^2}\int_{t_1}^{t_K} b\left(X_s, p_s,  s; \theta\right) \mathrm{d} X_s  \\ - &\frac{1}{2\sigma^2} \int_{t_1}^{t_K} b\left(X_s, p_s, s; \theta\right)^2 \mathrm{d} s \bigg),
    \label{eq:likelihood}
\end{align}
where $b$ is the unknown drift represented as one of the presented architectures and $\sigma$ is the diffusion coefficient in~\eqref{eq:mv_sde} and~\eqref{eq:linear_drift}. Following discretization, with the approximations $\Delta X_{t_j} = X_{t_{j+1}} - X_{t_j}$ and $\Delta{t_j}=t_{j+1}-t_j$, the log-likelihood is approximated by
\begin{align*}
 \log \, \mathcal{L}(\theta; t_1, t_K) \approx \sum_{j=1}^{K-1} b\left(X_{t_j},p_{t_j}, t_j; \theta\right) ( X_{t_{j+1}} - X_{t_j}) \\ - \frac12 \sum_{j=1}^{K-1} b\left(X_{t_j}, p_{t_j},  t_j; \theta\right)^2 ( t_{j+1} - {t_j}).
\end{align*}
Optimization is performed using standard gradient based optimizers with the drift $b$ represented as one of the presented architectures. 

\subsection{Estimation with Brownian bridges}
\label{sec:est_bb}

Often data are not collected at uniform intervals in time, but rather, the time marginals may be collected at irregular intervals. 
In that case, we consider an interpolation approach to maximizing the likelihood building on the results of~\citet{lavenant2021towards} and~\citet{cameron2021robust} in the It\^o-SDE case. 
We can write the likelihood conditioned on the set of observations (dropping the particle index for ease of notation) as 
\begin{align*}
\mathcal{L}_{BB}(\theta) = \mathbb{E}_\mathbb{Q}\left[\prod_{j=1\ldots K-1}\mathds{1}_{\{ Z_{t_{j+1}} = X_{t_{j+1}}\} }\mathcal{L}(\theta; t_{j}, t_{j+1}) \right ]
\end{align*}
where $\{Z_s: s \in [t_j, t_{j+1}]\}$ is a Brownian bridge from $X_{t_j}$ to $X_{t_{j+1}}$ and $\mathbb{Q}$ is the Wiener measure.
Brownian bridges can easily be sampled and reused for computing the expectation.
 By applying Jensen's inequality, we can write an evidence lower bound (ELBO) as 
\begin{align}
   \nonumber &\log \mathcal{L}_{BB} \\ &\geq \mathbb{E}_\mathbb{Q}
\left [ \sum_{j=1\ldots K-1} \log \mathcal{L}(\theta; t_{j}, t_{j+1}) \mathds{1}_{\left\{ Z_{t_j} = X_{t_j}  \right \}_{j=1}^K} \right ].
\label{eq:bb_elbo}
\end{align}
In this case, the estimator aims to fit the observed marginal distributions exactly while penalizing deviations from the Brownian bridge paths in regions without data.

\subsection{Estimation with explicit marginal law}\label{sec:est_marginallaw}

Returning to the ML architecture described in Section~\ref{sec:nf}, where we explicitly model the density $p_t$ with a generative network $\hat{p}_t$, our estimator should enforce the consistency between $\hat{p}_t$ and the flow relating to the drift. We do so using the PDE in~\eqref{eq:pde}. 
Let the parameters of the drift be $\theta$ and the parameters of the generative model be $\phi$, we solve the optimization problem
\begin{align}
\label{eq:nf_mle}
        &\max_{\theta, \phi} \quad \mathbb{E}\left[\mathcal{L}( \theta, \phi \mid \{X_{t_j}\}_{j=1\ldots K})\right] \quad s.t. \\
        &  \int_{t_{j}}^{t_{j+1}} \left| \hat{p}_s(x ; \phi) - \mathbb{E}\left[\hat{p}_{t_{j+1}}\left(\hat{X}_{t_{j+1}} ; \phi\right) \mid \hat{X}_s = x\right]  \right |\mathrm{d} s = 0 
        \label{eq:nf_pde}
\end{align}
for time intervals indexed by $j=1,\ldots, K-1$, state space $ x \in \mathrm{supp}(X_t)$,  and where the trajectories of $\hat{X}_t$ are given by the dynamics of the ML architecture, specifically
\begin{align*}
\nonumber \mathrm{d}\hat{X}_t = \nmf{f(\hat{X}_t,t;\theta)}\mathrm{d}t &+ \mf{\mathbb{E}_{y_t\sim \hat{p}_t(\cdot;\phi)}\left [\varphi\left(\hat{X}_t,y_t;\theta\right)\right]}\mathrm{d}t \\ &+ \sigma\mathrm{d}W_t.
\end{align*}

The likelihood at the observed margins is first maximized in~\eqref{eq:nf_mle}.
In~\eqref{eq:nf_pde}, the marginals at previous times are regularized using the correspondence between the PDE and its associated SDE via the nonlinear Kolmogorov backwards equation~\citep{buckdahn2017mvpde}, which describes $p_t$ as an expectation of trajectories at a terminal time, i.e. $p_t(x)=\mathbb{E}[p_T(X_T) \mid X_t=x]$ for $t<T$. 

\section{Modeling properties}

Having discussed the architectures and estimators, we now discuss specific properties of the modeling framework, which follow from the theoretical discussion presented in Section~\ref{sec:properties}. 
We first discuss how the factorization into $\varphi$ and MF lends to an implicit regularization of the IM architecture. 
We then compare the gradient flows of It\^o-SDEs and MV-SDEs.  We additionally provide more intuition on the proposed architectures in Appendix D.

\subsection{Implicit regularization of the implicit measure architecture}
\label{sec:implicit}

Closely related to the IM architecture are MLP representations of It\^o-SDEs, where we previously remarked may model MV-SDEs. 
On the other hand, the factorization of the IM architecture into an interaction function and a learned measure leads to a type of implicit regularization when the parameters are estimated using gradient descent. 
\begin{figure*}
    \centering
    \includegraphics[width = \textwidth]{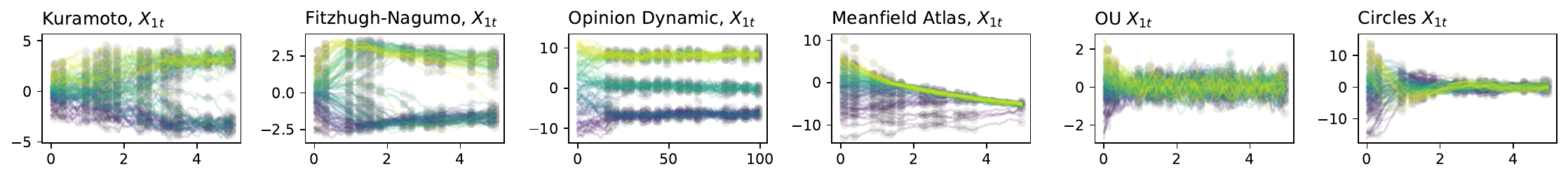}
    \includegraphics[width = \textwidth]{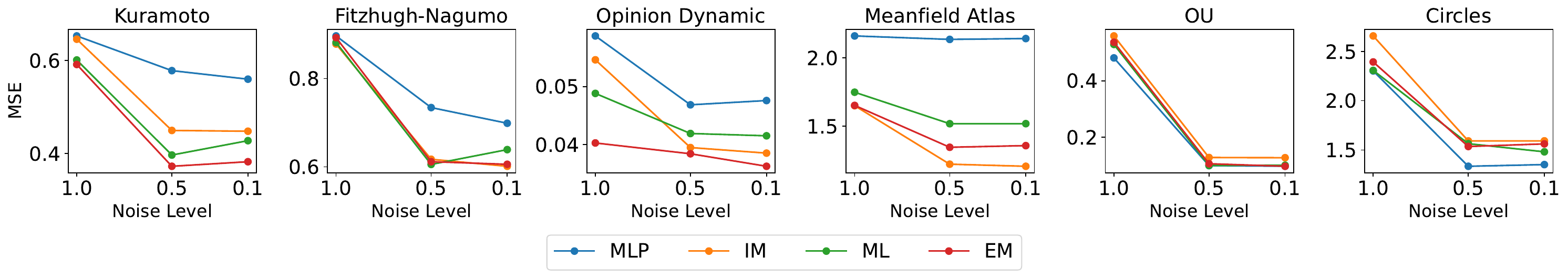}
    \centering
     \begin{subfigure}[t]{0.16\textwidth}
         \centering
         \includegraphics[width=\textwidth]{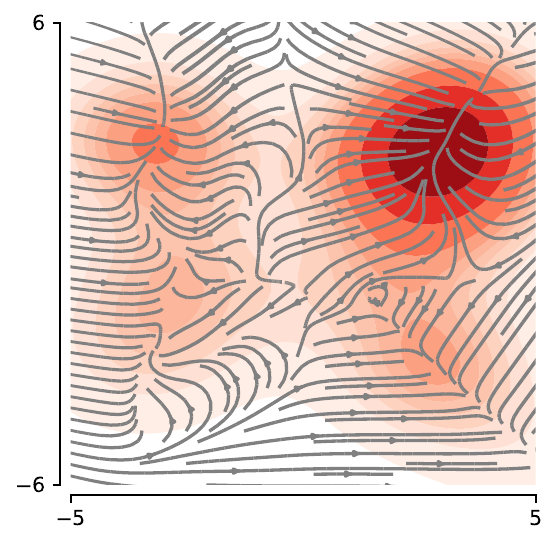}
         \caption{MLP}
         \label{fig:MLP_kura}
     \end{subfigure}
     \hfill
     \begin{subfigure}[t]{0.16\textwidth}
         \centering
         \includegraphics[width=\textwidth]{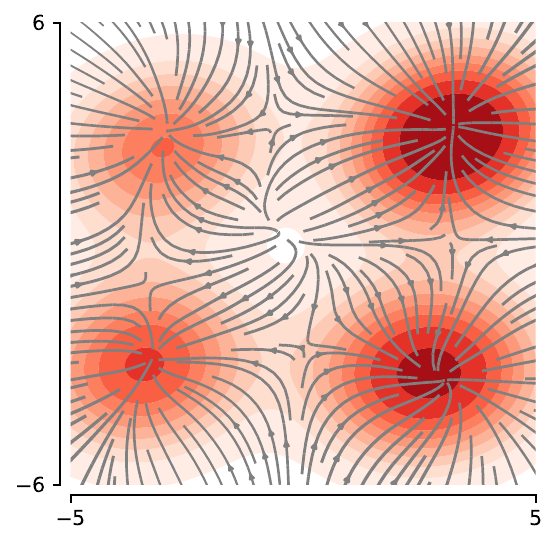}
         \caption{EM}
         \label{fig:Xt_kura}
     \end{subfigure}
     \hfill
     \begin{subfigure}[t]{0.16\textwidth}
         \centering
         \includegraphics[width=\textwidth]{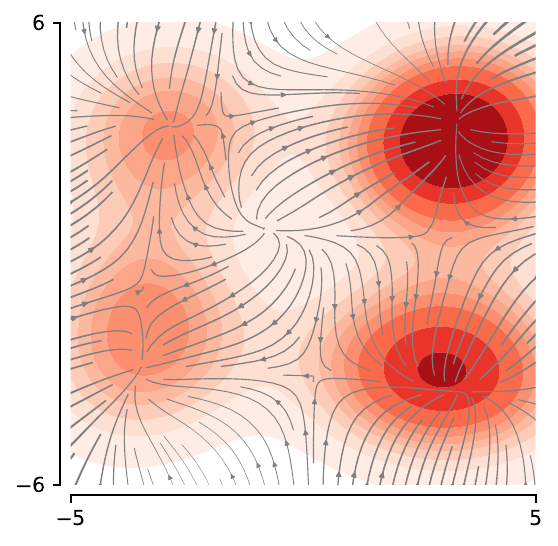}
         \caption{IM}
         \label{fig:W0_kura}
     \end{subfigure}
     \hfill
     \begin{subfigure}[t]{0.16\textwidth}
         \centering
         \includegraphics[width=\textwidth]{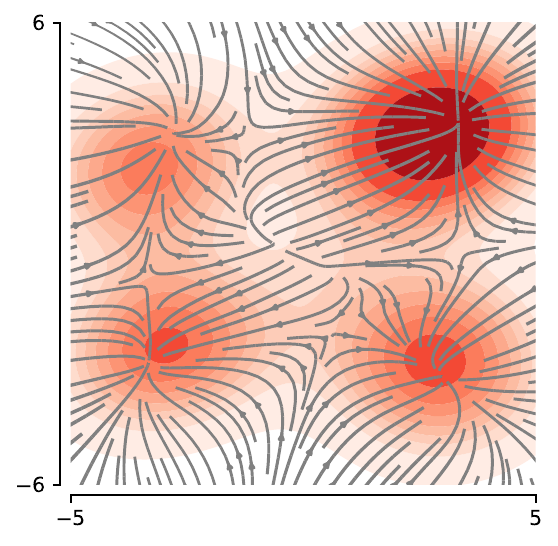}
         \caption{ML}
         \label{fig:gen_kura}
     \end{subfigure}
     \hfill
     \begin{subfigure}[t]{0.16\textwidth}
         \centering
         \includegraphics[width=\textwidth]{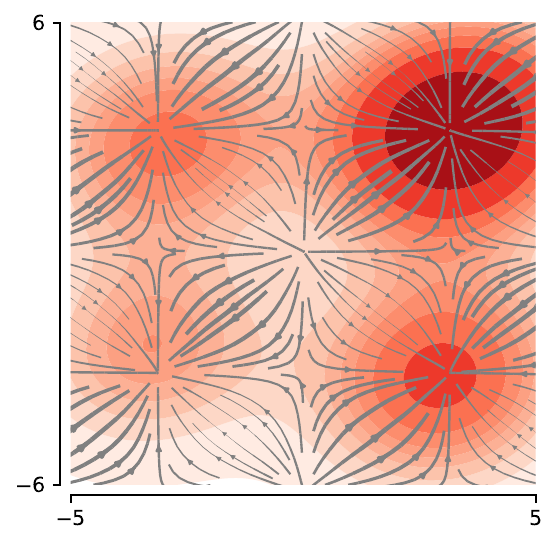}
         \caption{Truth}
         \label{fig:true_kura}
     \end{subfigure}
    \caption{Top row: sample paths from the different synthetic datasets. Middle row: mean squared error (MSE) of different architectures' performance (average of 10 runs) on drift estimation, under the effect of different levels of observation noise. 
    Bottom row: Example of estimated gradient flow of Kuramoto model at terminal time. The colors correspond to the density of generated samples at terminal time.}
    \label{fig:noise_result}
\end{figure*}

\begin{theorem}[Implicit Regularization]
\label{prop:ib}
Suppose $f$ and $\varphi$ are known and fixed. 
Consider a mean-field architecture as described above with $f, \varphi$ known and a linear structure, i.e.
$
B(X_t, p_t, t) =  \mf{\int \varphi(X_t, y) \, \mathrm{d} p_t(y)} + \nmf{f(X_t, t)} .
$
Further, assume that $\varphi$ is twice differentiable. 
Then, for each time step $t$, the minimizing finite width MF with weight matrix $W_0 \in \mathbb{R}^{n \times d}$ and $i$th row $W_0^{(i)}$ under gradient descent satisfies the following optimization problem
\begin{align*}
\min_{W_0} \quad  \sum_{i=1\ldots n}\sum_{j=1\ldots d}\varphi(X_t, W_0^{(i)})_j \quad  \\
\mathrm{s.t.} \quad  \mathbb{E}\left[ \frac1{2 \Delta t} \left \|X_{t+\Delta t} - X_{t} - b(X_{t}, p_{t},t)  \right \|^2 \right] = 0.
\end{align*}
\end{theorem}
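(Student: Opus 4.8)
The statement is an implicit-bias result: among all weight matrices $W_0$ that fit the data, the one selected by gradient descent is distinguished by minimizing the functional $\sum_{i,j}\varphi(X_t,W_0^{(i)})_j$. The plan is to recognize the displayed constraint as the data-fitting (zero-loss) condition coming from the MLE objective, and then to characterize the gradient-descent limit through the Karush--Kuhn--Tucker (KKT) conditions of the stated program, exploiting the special way in which the loss depends on $W_0$ only through the aggregated mean-field drift.

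First I would make the constraint explicit. Substituting $b = \tfrac1n\sum_i \varphi(X_t,W_0^{(i)}) + f$ into the discretized log-likelihood of Section~4.1 and completing the square in $b$, maximizing $\log\mathcal L$ over $W_0$ is equivalent, up to a $W_0$-independent additive constant, to minimizing the least-squares drift-matching loss $\ell(W_0) := \mathbb E\big[\tfrac{1}{2\Delta t}\|X_{t+\Delta t}-X_t - b(X_t,p_t,t)\|^2\big]$. This is exactly the left-hand side of the theorem's constraint, so the feasible set $\{\ell(W_0)=0\}$ is precisely the set of weights for which the learned drift reproduces the data increments (read in the $L^2$/conditional-mean sense appropriate to the Euler--Maruyama residual). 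This places the analysis in the over-parameterized regime, since $\ell$ depends on the $n\times d$ matrix $W_0$ only through the $d$-dimensional quantity $g(W_0):=\tfrac1n\sum_i\varphi(X_t,W_0^{(i)})$.

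Next I would compute the reduced gradients. By the chain rule and the fact that $\ell$ factors through $g$, gradient flow obeys $\dot W_0^{(i)} = -\nabla_{W_0^{(i)}}\ell = -\tfrac1n J_i^\top r$, where $J_i := D_{W_0^{(i)}}\varphi(X_t,\cdot)$ is the Jacobian of the interaction in its second argument and $r := \nabla_g\ell$ is a single residual/dual vector common to all particles; twice-differentiability of $\varphi$ guarantees $J_i$ is $C^1$, so the flow is well-posed and its stationarity can be differentiated. The central structural observation is that every particle moves along $J_i^\top(\text{common vector})$. For the claimed objective $S(W_0):=\sum_{i,j}\varphi(X_t,W_0^{(i)})_j$ one has $\nabla_{W_0^{(i)}}S = J_i^\top \mathbf 1$, again of this form. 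Writing the Lagrangian $S + \mu\,\ell$ and imposing KKT stationarity gives $J_i^\top(\mathbf 1 + \tfrac{\mu}{n}r)=0$ for every $i$; under a genericity (full column rank) condition on the $J_i$ this forces the common multiplier direction, matching the directions realized by the flow. I would then argue that, because $S$ is linear in the per-particle contributions and the feasible set is cut out by $\ell$, any gradient-flow limit on $\{\ell = 0\}$ is a KKT point of the displayed program, and hence the minimizer.

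The hard part will be upgrading this KKT/stationarity picture to the exact claim. Three points require care. First, one must show that gradient descent actually converges to the zero-loss manifold and that its limit is the \emph{global} minimizer of $S$ there, not merely a stationary point; this relies on the benign geometry of over-parameterization and on controlling the variation of the Jacobians $J_i$ along the trajectory, which is where twice-differentiability of $\varphi$ (via a second-order Taylor expansion) does the real work. Second, the constraint is a stochastic expectation, so ``$\ell = 0$'' must be interpreted as matching the conditional mean of the Euler--Maruyama increment rather than a pathwise identity; making the dual vector $r$ and the feasible set precise in this averaged sense is delicate. Third, one must justify that the particular functional $\sum_{i,j}\varphi(X_t,W_0^{(i)})_j$ --- rather than some other functional with gradient in the same column spaces --- is the one singled out by the flow, which amounts to identifying the extremized quantity of the dynamics. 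These are the steps I expect to be the main obstacle; the algebra of the earlier steps is routine once they are in place.
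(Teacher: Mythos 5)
Your setup matches the paper's up to the point of writing the zero-loss constraint: the paper likewise derives the constraint by discretizing the Girsanov log-likelihood into the least-squares drift-matching loss, and likewise exploits that the loss reaches $W_0$ only through the averaged interaction $\frac1n\sum_k\varphi(X_t,W_0^{(k)})$. But the KKT mechanism you propose for the main step has a genuine gap. With the scalar constraint $\ell(W_0)=0$, where $\ell\ge 0$ is the squared residual, every feasible point is a global minimum of $\ell$, so $\nabla\ell=0$ there; equivalently, your dual vector $r$ vanishes identically on the zero-loss set. Your stationarity equation $J_i^\top\bigl(\mathbf 1+\frac{\mu}{n}r\bigr)=0$ therefore collapses to $J_i^\top\mathbf 1=0$, which fails generically: LICQ is violated everywhere on the feasible manifold and the KKT conditions are uninformative on it. Reformulating with vector-valued residual constraints only yields that $\nabla_{W_0^{(i)}}S$ lies in the column space of $J_i^\top$ --- but, as you yourself concede in your third caveat, this is satisfied by infinitely many functionals (anything with per-particle gradient $J_i^\top v_i$), so first-order conditions cannot single out $\sum_{i,j}\varphi(X_t,W_0^{(i)})_j$. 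Identifying that particular functional \emph{is} the theorem; the KKT scaffolding restates the problem rather than solving it, and the claim that ``any gradient-flow limit on $\{\ell=0\}$ is a KKT point, hence the minimizer'' is false in general for nonlinear over-parameterized models, where the limit depends on initialization and trajectory.

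The paper fills exactly this hole with a second-order argument following \citet{belabbas2020implicit}: the implicit regularizer $K$ is characterized by a compatibility PDE requiring that the Hessian of $K$, acting on the gradient-descent dynamics $g(\theta,X)=\sum_i\partial_\theta\mathcal{L}(\theta,X^{(i)})$, equal a $\lambda$-weighted combination of the Hessians of the training loss acting on $g$. Concretely, the paper computes $\partial_{\theta_j}\mathcal{L}$ and the full Hessian, observes that the off-diagonal terms carry a factor $1/w^2$ (hence $1/w^3$ after multiplication by the first derivatives in $g$) and are negligible in the scaling $w=\mathcal{O}(1/\Delta_t)$, and then chooses multipliers $\lambda_i$ proportional to $\Delta_t$ times the \emph{inverse} residuals so that the PDE reduces to $\partial^2_\theta K=\sum_i\frac1w\sum_k\partial_{\theta_k,\theta_k}\varphi(X_t^{(i)},\theta_k)$, which integrates to the claimed regularizer $K=\sum_i\frac1w\sum_k\varphi(X_t^{(i)},\theta_k)$. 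None of these ingredients --- the Hessian-matching criterion, the width/step-size scaling that kills the cross-terms, the residual-dependent multipliers --- appears in your plan, and the three difficulties you flag at the end are precisely where this machinery does the work. To repair your proposal you would need to replace the KKT step with this second-order compatibility analysis (or an equivalent argument identifying the quantity conserved or extremized by the flow); your gradient computation $\dot W_0^{(i)}=-\frac1n J_i^\top r$ is correct and reusable, but it is only the starting point of that analysis.
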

\begin{proof}
We follow the blueprint in~\citet{belabbas2020implicit} and give full details in the appendix.
\end{proof}

Theorem~\ref{prop:ib} effectively says that the mean-field system approximated is the one that has the least influence from the other particles. 
In the case where $\varphi$ can be decomposed as a norm, this amounts to finding the drift parameterized by weight $W_0$ with smallest norm while still matching the marginals.

\subsection{Gradient flows of the MV-SDE}
To illustrate the difference between the particle flows of MV-SDEs and It\^o-SDEs, we consider a gradient flow perspective to describe the functionals that are minimized according to the different SDEs~\citep[Section 8.3]{villani2021topics}.
In the following remark, we apply this idea to a gradient flow that minimizes the energy distance. 

\begin{remark}[Minimizing the Energy Distance]
    Consider two densities $p_t,q$ such that $p_t \ll q$ for all $t$.
    The gradient flow induced by the MV-SDE with the drift
    \begin{align*}
    b(X_t, p_t, t) = &\mathbb{E}_{ y_t \sim p_t} \bigg[ \nabla \bigg ( 2\| X_t - y_t  \|\frac{\mathrm{d} q}{\mathrm{d}p_t} \\ &-\| X_t -y_t \|^2 - \| X_t - y_t  \|^2\left(\frac{\mathrm{d}q}{\mathrm{d}p_t}\right)^2 \bigg ) \bigg]
    \end{align*}
minimizes the energy distance between $p_t$ and $q$.  
\label{rmk:energy}
\end{remark}
The proof follows from a straightforward application of~\citet[Section 4]{santambrogio2017euclidean}. 
Note that this construction is only possible through distributional dependence in MV-SDE whereas the standard It\^o SDE does not satisfy this drift.
This has particular impact on generative modeling which we will later discuss in our experiment section.

\subsection{Relationship to attention}

Recently, works such as~\citet{sander2022sinkformers} described the relationship between interacting particle systems and the attention structure in the transformer architecture.
Here we briefly describe a motivation for using the proposed architectures in the sense that they describe a similar structure to attention. 

Recall that the attention module is defined by matrices $W_K, W_Q \in \mathbb{R}^{N_W \times d}, W_V \in \mathbb{R}^{N_V \times d}$ and the normalized attention matrix by
$$
\alpha_{i,j} = \frac{N \exp(\langle W_K X^{(i)}, W_Q X^{(j)}\rangle )}{ \sum_{k=1}^{N}\exp(\langle W_K X^{(i)}, W_Q X^{(k)}\rangle ) }.
$$
We focus on the attention matrix since it describes the dependence between particles $X^{(i)}$.

We can rewrite the above equation as an expectation   
$$
\alpha_{i,j} =  \frac{\exp(\langle W_K X^{(i)}, W_Q X^{(j)}\rangle )}{ \mathbb{E}_\nu[\exp(\langle W_K X^{(i)}, W_Q y\rangle )] },
$$
where the expectation is taken with respect to a discrete measure $\nu = \sum_{k=1}^{N} \delta_{X^{(k)}}$, as we do in the IM architecture. 
We can write the numerator as the expectation with an indicator and the denominator as the full expectation, 
$$
\alpha_{i,j} = \frac{ \mathbb{E}[\exp(\langle W_K X^{(i)}, W_Q y\rangle ) \mathds{1}_{y= X^{(j)}} ] }{\mathbb{E}[\exp(\langle W_K X^{(i)}, W_Q y\rangle )]}.
$$
Finally, since we do not assume a particular structure on $\varphi$ in the IM architecture, we can let $\varphi$ be equal to the exponential of the dot product with the transformation by $W_K, W_Q$.
Note that this is applied to particles at each time marginal $t$ rather than for a sequence of particles. 
A sequence of particles would correspond to the case of non-exchangability, which is a direction of future work.

\section{Numerical experiments}
\label{sec:exp}

 For \emph{Q1} we discussed modeling and inferring distributional dependence.  
 We now wish to answer \emph{Q2} and quantify the effect of explicit distributional dependence in machine learning tasks. 
We test the methods on synthetic and real data for time series and generative modeling. 
The main goal is to determine the difference between standard Neural It\^o-SDE and the proposed Neural MV-SDEs under different modeling scenarios. 
In that sense, the baseline we consider is the It\^o-SDE parameterized using an MLP
However, we also consider other deep learning based methods for comparison in a broader context.
We abbreviate the different architectures as the Neural It\^o-SDE (MLP) and Neural MV-SDEs: Empirical Measure (EM), Implicit Measure (IM) and Marginal Law (ML) architectures. These architectures were presented in Section~\ref{sec:architectures} and summarized in~\figurename~\ref{fig:arch}.
Full descriptions of the models, baselines, and datasets are given in the appendix. 
\paragraph{Synthetic data experiments} 
\label{sec:synthetic_data}
We first consider the application of MV-SDEs in physical, biological, social, and financial settings.
These relate to the original development of MV-SDEs and consider how the architecture can be applied in scientific machine learning settings.
We benchmark the proposed methods on 4 canonical MV-SDEs: the Kuramoto model which describes synchronizing oscillators
~\citep{Sonnenschein2013kura}, the mean-field FitzHugh-Nagumo model which characterizes spikes in neuron activations
~\citep{mischler2016kinetic}, the opinion dynamic model on the formation of opinion groups
~\citep{sharrock2021parameter}, and the mean-field atlas model for pricing equity markets~\citep{jourdain2015capital}. 
These models exhibit the non-local behavior that was originally of theoretical interest.
We additionally benchmark the proposed methods on two It\^o-SDEs: an Ornstein–Uhlenbeck (OU) process and a circular motion equation to determine the performance on It\^o-SDEs. 
Finally, to understand the performance on discontinuous paths related to aggregation behavior, we benchmark the proposed methods on an OU process with jumps in Figure~\ref{fig:jump_results}.

Since the true drifts of the synthetic data are known, we directly compare the estimated drifts to the true drifts using mean squared error (MSE). 
The performance on five different datasets with three different levels of added observational noise is presented in Figure~\ref{fig:noise_result}. 
The bottom row of Figure~\ref{fig:noise_result} illustrates an example of the density and the gradient flow at the terminal time for the Kuramoto model.
The proposed mean-field architectures outperform the standard MLP in modeling MV-SDEs; moreover, incorporating explicit distributional depedence does not diminish the performance in estimating It\^o-SDEs.
When modeling processes with jump discontinuities, Figure~\ref{fig:jump_results} highlights the flexibility of the proposed methods, IM, ML, to match such models.
The EM likely does not perform as well due to the high variance of the empirical measure, leading to difficulties in learning.
Additionally, the MLP does not have an explicit decomposition between the MV and It\^o components, resulting in issues when estimating the feedback between the particles inducing jumps.  
\begin{figure}
    \centering
    \begin{subfigure}[t]{0.22\textwidth}
        \includegraphics[width=\textwidth]{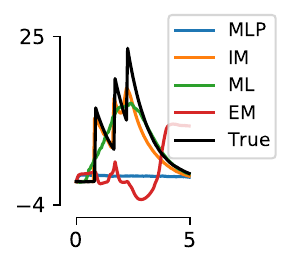}
        \caption{Average paths.}
    \end{subfigure}
    \begin{subfigure}[t]{0.22\textwidth}
        \includegraphics[width=\textwidth]{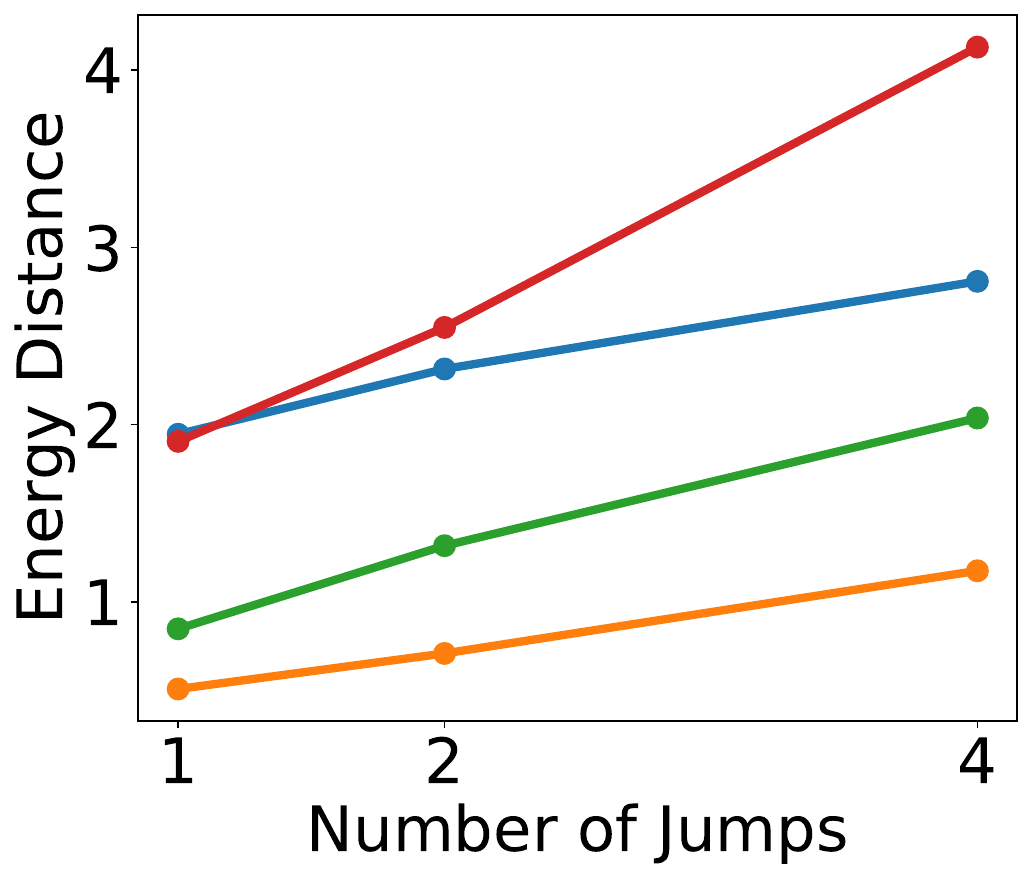}
        \caption{Energy distance.}
        \label{fig:jump_results}
    \end{subfigure}

    \caption{Results for approximating sample paths containing jumps.}
\end{figure}
\paragraph{Real data experiments}
We consider two real examples: crowd trajectory in an open interacting environment, which is related to the Cucker-Smale model~\citep{cucker2007emergent, warren2018collective}
and chemically stimulated movement of organisms (chemotaxis), which can be described using the Keller-Segel model~\citep{tomavsevic2021new, Keller1971KSChemo}. 

\begin{figure}
    \centering
    \includegraphics[width=0.22\textwidth, trim= 20pt 20pt 20pt 20pt]{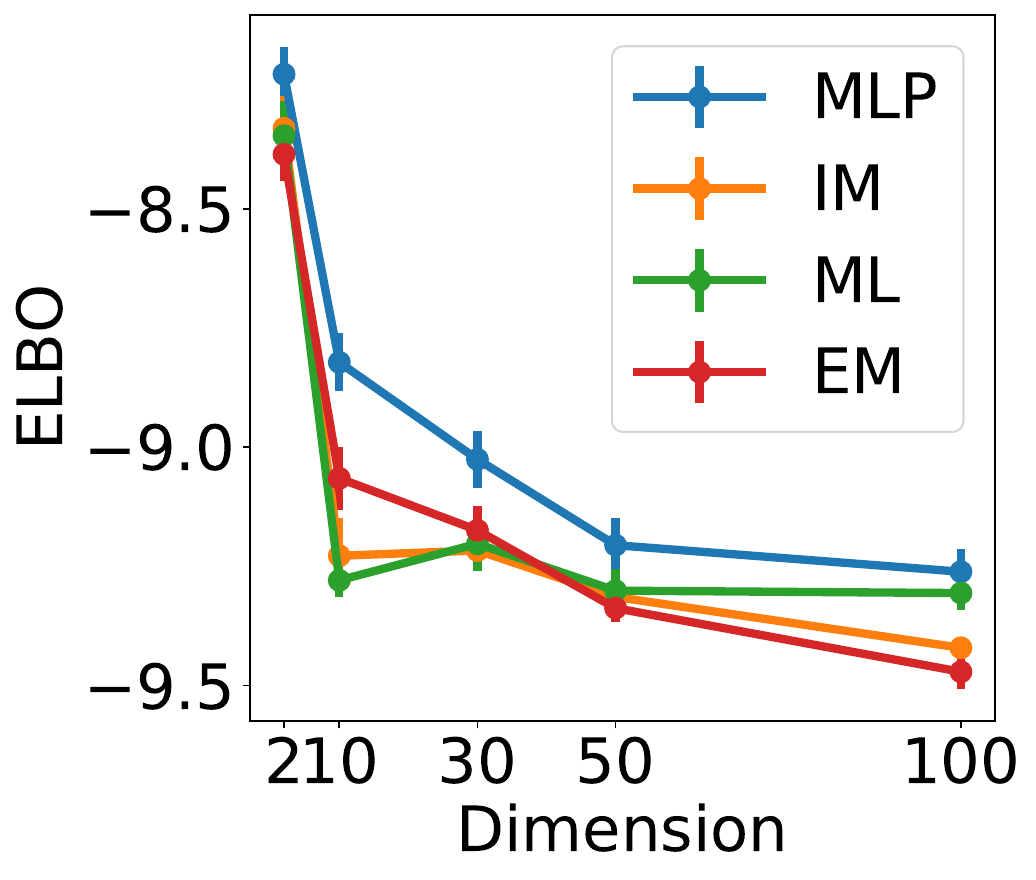}
    \captionof{figure}{ELBO of generated paths from standard Gaussian to eight Gaussian mixture (in increasing dimension) evaluated against OT mapping.}
   \label{fig:eightgauss}
\end{figure}

We evaluate the proposed architectures in these modeling tasks by comparing the goodness-of-fit of generated path samples to the observed path samples, measured in normalized MSE (normalized with sample variance) with respect to the held out data. 
We also benchmark against the DeepAR probabilistic time series forecaster~\citep{salina2020deepAR} with RNN, GRU, LSTM, and Transformer (TR) backbones as baseline models to compare the goodness-of-fit. 
This provides context of the performance within the standard deep learning-based time series forecasting methods.

The performances of different architectures are presented in Table~\ref{tab:realdataTS}.
For the EEG experiments, the proposed architectures generally perform better than the baselines in generating paths within the training time steps, and on par with the DeepAR architectures for forecasting (full results presented in the appendix). 
For the crowd trajectory data, the proposed MV-SDE architectures outperform the MLP, EM and DeepAR architectures for forecasting.
Notably, the $\mathrm{EM}$ architecture exhibits high variance in the crowd trajectory data, indicating the difficulty of relying on the empirical margins to compute expectations. 
For chemotaxis data, the MV-SDE based architectures outperform the DeepAR baselines.
Additional experiments and results are presented in appendix. 
Figures illustrating the sample paths are included in the appendix.

\paragraph{Generative modeling experiments}
We focus on applying the bridge estimator discussed in Section~\ref{sec:est_bb} to map between a Gaussian and a target distribution for the purposes of generative modeling.
This experiment is used to understand the effect of distributional dependence on the quality of samples generated. 
We are interested in studying two aspects: 1) the properties of the learned mapping, and 2) the generated trajectories. 
We first study the properties of the learned mapping using a synthetic eight Gaussian mixture with increasing dimensionality. 
We compare the performance of different architectures by evaluating the ELBO of the sample paths generated by the optimal transport (OT) mapping between the initial distribution and held out target samples.
Then we evaluate the generated trajectories through the energy distance motivated by Remark~\ref{rmk:energy} between generated and held-out data for 5 real data density estimation experiments, since the MV-SDE describes the flow that minimizes the energy distance.  
In addition, we compare to common density estimators of variational autoencoder (VAE)~\citep{kingma2013auto}, Wasserstein generative adversarial network (WGAN)~\citep{gulrajani2017improved}, masked autoregresive flow (MAF)~\citep{papamakarios2017masked} and score-based generative modeling through SDEs, which corresponds to a constrained form of the MLP architecture~\citep{song2020score}.

The MV-SDE architectures not only outperform the It\^o architecture for all dimensions in the eight Gaussian experiment, as shown in Figure~\ref{fig:eightgauss}, but also for the 5 real data density estimation experiments, as shown in Table~\ref{tab:realdataGen}, while outperforming common baselines.  
Sampling is performed using standard Euler-Maruyama, with full details in the appendix. 
This again suggests the MV-SDE provides a more amenable probability flow for generative modeling compared with the It\^o-SDE.

\begin{table*}
    \caption{Time series estimation on held out trajectories. Values in \textbf{bold} and \textit{italic} are best and second best, respectively.}
    \label{tab:realdataTS}
    \centering
    \begin{tabular}{@{}llll@{}}
    \toprule
                  & Crowd Traj             & C.Cres                  & E.Coli            \\  \toprule
    MLP (It\^o)   &  0.068 (0.03)           & 0.096           (0.002) & \textit{0.080} (0.003) \\ 
    IM            &  \textit{0.034} (0.01)  & 0.094           (0.003) & \textbf{0.080}  (0.001) \\
    ML            &  \textbf{0.016} (0.01)  & \textbf{0.093}  (0.002) & 0.084          (0.002) \\
    EM            &  0.091 (0.059)           & \textit{0.093} (0.004)  & 0.086          (0.004) \\ \midrule
    LSTM          &  1.408 (0.92)            & 1.159           (0.234) & 0.585          (0.350) \\
    RNN           &  1.05  (0.54)           & 1.563           (1.070) & 0.773          (0.092) \\
    GRU           &  1.339 (0.61)          & 0.826           (0.289) & 0.568          (0.301) \\
    TR            &  2.732 (0.88)           & 1.503           (0.212) & 1.204 (0.212)           \\\bottomrule
    \end{tabular}
\end{table*}

\begin{table*}

    \centering
    \caption{Density estimation: energy distance between observed samples and generated samples. Values in \textbf{bold} and \textit{italic} are best and second best, respectively.}
    \label{tab:realdataGen}
    \begin{tabular}{@{}llllll@{}} 
    \toprule
                & \textsc{POW} & \textsc{MINI} & \textsc{HEP} & \textsc{GAS} & \textsc{CORT} \\ \toprule
    MLP (It\^o) & 0.34 (0.1)          & 0.67 (0.05)           & 0.54 (0.05)          & 0.41 (0.08)            & 0.74 (0.06) \\ 
    IM       & 0.29 (0.08)          & \textbf{0.40} (0.0)  & 0.41 (0.03)          & \textbf{0.29} (0.08)  & \textbf{0.53} (0.03) \\
    ML & \textbf{0.28} (0.08) & \textit{0.44} (0.03)  & \textit{0.37} (0.03)  & 0.31 (0.06)           & 0.57 (0.03) \\
    EM       & 0.33 (0.1)          & 0.46 (0.04)           & 0.43 (0.05)          & \textit{0.30} (0.03)  & 0.58 (0.037) \\ \midrule
    VAE         & 1.2 (0.02)           & 2.1 (0.15)           & 1.8 (0.03)          & 1.5 (0.02)  & 2.4 (0.2) \\
    WGAN       & 1.2 (0.02)          & 2.1 (0.003)           & 1.8 (0.01)          & 1.3 (0.02)    & 2.2 (0.01)  \\
    MAF         & \textit{0.29} (0.04) & 0.48 (0.01)           & \textbf{0.31} (0.02) & 0.52 (0.03)  & \textit{0.53} (0.03) \\
    Score & 0.30 (0.05)           & 0.50 (0.02)         & 0.32 (0.03)          & 0.56 (0.04)  & 0.58 (0.02)\\
    \bottomrule
    \end{tabular}
\end{table*}

\section{Discussion}
In this paper we discuss an alternative viewpoint of diffusion type models beyond the standard It\^o-SDE parameterization.
In particular, we focus on MV-SDEs and discuss neural representations of a process that depends on the distribution, and ways of making this dependence more explicit. 
We demonstrated the efficacy of the proposed architectures on a number of synthetic and real benchmarks. 
The results suggest that the proposed architectures provide an improvement in certain time series and generative modeling applications, likely due to the more general probability flow that the MV-SDEs induce.

\textbf{Limitations} 
We studied the implicit regularization of the IM architecture under gradient descent, and the extension of the analysis to the other proposed architectures is important to understand the corresponding regularization. 
With regard to computing expectations, using a multilevel scheme~\citep{szpruch2019iterative} could improve accuracy while reducing computational cost. 

\paragraph{Future directions} 
The proposed architectures provide a baseline to extend the work to estimation of alternative processes. 
Heterogeneity amongst the particles is a useful property in many types of systems, e.g. described in~\citet{lacker2022case}. 
Extending the $W_0$ architectures to the case of heterogeneous agents corresponds to introducing depth into the architecture (i.e. having multiple measures $W_0$ to take the expectation with respect to). 
Additionally, solving inverse problems using Wasserstein gradient flows solved using MV-SDEs can be another application of the proposed methods~\citep{crucinio2022solving}.
Interpreting the $W_0$ architecture through the interpolation lens used in~\citet{szpruch2019iterative} could also provide avenues for improvement of the architecture.
Developing optimal estimators for MV-SDE based point processes using the proposed architectures to extend Ito-SDE based point process representations (e.g. in~\citet{hasan2023inference}) could be a useful direction for extension when observations are only given as arrival times of events. 
Finally, establishing convergence rates for the architectures such as the $W_0$ or $X_t$ architectures would possibly be a direction for further analysis on the proposed algorithms.

\subsubsection*{Acknowledgements}
This work was supported in part by the Air Force Office of Scientific Research under award number FA9550-20-1-0397.
AH was partially supported by an NSF Graduate Research Fellowship.

\clearpage

\bibliography{refs}
\bibliographystyle{plainnat}

\appendix
\onecolumn
\aistatstitle{Neural McKean-Vlasov Processes: \\
Supplementary Materials}
\section{Proofs}

In the main text, we briefly discussed some theoretical advantages in terms of the flexibility in the time marginal distributions that MV-SDEs can represent versus It\^o-SDEs, such as non-local dynamics and jumps in the sample paths. For more background and properties, we refer to the notes by~\citet{lacker2018notes} and the book by~\citet{carmona2018probabilistic}.

In this section, we begin by considering the theoretical advantages of the proposed architectures and estimators. Specific to the architectures, we develop the implicit measure architecture, with a comparison to attention. In addition, we study the implicit regularization of explicit distributional dependence, with a comparison to optimal transport. Specific to the estimators, we develop the compatibility criterion for the modeled density to be consistent with the flow of the modeled SDE, and discuss a similar interpretation for the interpolation approach of the Brownian bridge estimator.

\subsection{Development of implicit measure architecture}

The implicit measure (IM) architecture is motivated by the fact that given a drift $b$ that is Lipschitz continuous, by the universal approximation theorem, a two-layer multi-layer perceptron (MLP) can approximate $b$ to arbitrary precision~\citep{hornik1991approximation}. We first show that the drift of a MV-SDE may be represented by a MLP then describe the IM architecture where the distributional dependence is made more apparent.

\begin{proof}
    Consider a McKean-Vlasov process where the drift $b$ is factorized into a linear form
    $$
    b(X_t,p_t,t) =  f(X_t, t) + \mathbb{E}_{y_t \sim p_t}[\varphi(X_t-y_t)]
    $$
    and assume that $f(\cdot;\theta)$ and $\varphi(\cdot;\theta)$ are well approximated by MLPs following the universal approximation theorem. 
    It remains to show that $\mathbb{E}_{y_t\sim p_t}[\varphi(X_t-y_t)]$ can be well approximated by an MLP. We will begin by presenting the proof for the case where the law is stationary, then perform a change of measure to extend to the case where the law is non-stationary.

    Recall that a MLP can be written in terms of an expectation as
    \begin{align*}
    \mlp^{W,b}(x) &= \int \sigma \left ( Wx + b \right ) \mathrm{d}\nu\left(W,b\right) \\
    &= \mathbb{E}[\sigma(Wx+b)]
    \end{align*}
    where the expectation is taken over $\nu( \cdot )$, a measure over the space of parameters $W,\, b$, and $\sigma$ is an activation function. 
    By our original argument that $\varphi$ is well approximated by a MLP, we can let that represent the activation function.     Next, set $\nu(W)=\delta_{I_d}$ and $\nu(b) = \mathrm{Law}(-X_t)$. Since we assumed $X_t$ is stationary, $\mathrm{Law}(X_t) = \mathrm{Law}(X_\star)$ for all $t$.
    We now have our approximation as 
    \begin{align*}
    \mlp^{W,b}(x) &= \int \varphi \left ( x - b \right ) p_t(b)\mathrm{d}b \\
    &= \mathbb{E}_{y\sim p_t}[\varphi(x-y)].
    \end{align*}

    \paragraph{Non-stationary law}
    Next we consider the case where the law of $X_t$ is not the same for all $t$.
    For this argument, we will consider the change of measure that maps $p_t$ to $q$.
    Since we are assuming that the diffusion is constant, all measures $p_t$ are absolutely continuous with respect to each other.
    We additionally assume that Novikov's condition is satisfied. 
    Following Girsanov's theorem, we can write the expectation in terms of this changed measure by introducing the factor $\frac{\rm{d}p_t}{\rm{d}q}$
    $$
    \mathbb{E}_{y\sim p_t}[\varphi(x-y)] = \mathbb{E}_{y\sim q}\left[\varphi( x - y) \frac{\mathrm{d} p_t}{\mathrm{d} q} \right].
    $$
    Under this formulation $q$ is the learned measure and $p_t$ is the measure at each time point $t$.
    Assuming that the function $\varphi(\cdot;\theta) \frac{ \mathrm{d}p_t}{\mathrm{d} q}$ can be learned for all $t$ as another MLP $\tilde{\varphi}\left ( \cdot , t;\theta \right )$, we conclude the proof. 
    A similar idea was explored in~\citet{du2021empirical} where the authors attempt to compute a stationary measure as a change of measure of particle samples. 
    
\end{proof}

    Following a similar notation to the MLP proof, we only change the base measure such that it is given by the mean-field layer.
    A similar change of measure argument is then applied to complete the development of the IM architecture. 

The proposed neural architectures differ from existing methods that consider the empirical measure, since we consider parameters to describe the measure at different time points. 
The proposed neural architectures also differ from existing methods that describe Ito-SDEs since we consider a more explicit parameterization of distributional dependence and a more general interaction function, $\varphi(x,y)$ instead of $\varphi(x-y)$.

\subsection{Implicit regularization of explicit distributional dependence}
\label{sec:bias_proof}

\begin{proof}

Consider a McKean-Vlasov process governed by
$$
\mathrm{d}X_t = \left\{ f(X_t, t) + \mathbb{E}_{y_t \sim p_t}[\varphi(X_t, y_t)]  \right\} \mathrm{d} t + \mathrm{d}W_t.
$$

Our goal is to understand the implicit regularization of the IM architecture where the expectation is approximated by a discrete measure $\nu=\frac{1}{w}\sum_{k=1}^w \delta_{\theta_k}$ and $\theta_k$ corresponds to the $k$th row of a $w\times d$ weight matrix $\theta$. We show that the path preferred by gradient descent is the one that minimizes $ \mathbb{E}_{y_t \sim \nu} \left[\varphi(X_t, y_t) \right] $, i.e. the solution with least influence from other particles. In addition, when $\varphi$ can be decomposed as a norm, this amounts to finding the weights with smallest norm. For ease of notation, we will begin by presenting the proof for one time step and in 1-dimension, i.e. $d=1$.

Following the blueprint given by~\citet{belabbas2020implicit}, we wish to study the implicit bias of the weight matrix $\theta$ by understanding the compatibility between two optimization problems, the \emph{training} problem given by the loss:
\begin{equation}\label{eq:loss_ir}
\min_\theta\mathcal{L}(\theta, X) = \sum_{i=1}^N\frac1{2\Delta_t^2} \left( (X_{t+\Delta_t}^{(i)} - X_t^{(i)}) - \left( f(X_t^{(i)},t)  + \frac1w \sum_{k=1}^w \varphi(X_t^{(i)}, \theta_k) \right) \Delta_t \right)^2
\end{equation}
for observations $\left\{X_t^{(i)}, X_{t+1}^{(i)}\right\}_{i=1\ldots N}$ and the \emph{regularization} problem given by
$$
\min_\theta K(\theta,X)\quad\textrm{s.t.}\quad\mathcal{L}(\theta,X)=0
$$
for some function $K$ that satisfies the PDE:
\begin{equation}\label{eq:pde_k}
\frac{\partial^2 K}{\partial \theta^2} g(\theta,X) + \sum_{i=1}^N \lambda_i \frac{\partial^2 \mathcal{L}}{\partial\theta^2}(\theta, X^{(i)}) g(\theta,X) = 0
\end{equation}
where $g$ denotes the dynamics of gradient descent given as
$$
g(\theta,X)=\dot{\theta}=\sum_{i=1}^N\frac{\partial \mathcal{L}}{\partial \theta}(\theta,X^{(i)}).
$$
Following~\citet{belabbas2020implicit}, the PDE\eqref{eq:pde_k} has a simple interpretation: the Hessian of $K$, acting on $g$, is a linear combination of the Hessians of $\mathcal{L}$ at datapoints $X^{(i)}$, acting on $g$. The next step is to find the function $K$.

We compute the first derivative
$$
\partial_{\theta_j} \mathcal{L}^{(i)} = \left(\frac{-1}{\Delta_t w}\left (( X_{t+\Delta_t}^{(i)} - X_t^{(i)}) - \left(f^{(i)} + \frac1w \sum_{k=1}^w \varphi(X_t^{(i)}, \theta_k) \right)\Delta_t \right )  \partial_{\theta_j} \varphi(X_t^{(i)}, \theta_j)   \right).
$$
Then the second derivative as
\begin{align*}
\partial_{\theta_j, \theta_j} \mathcal{L}^{(i)} = &\biggl (\frac1{w^2}(\partial_{\theta_j} \varphi(X_t^{(i)},\theta_j))^2 \\&- \frac{1}{\Delta_t w}\left ( (X_{t+\Delta_t}^{(i)} - X_t^{(i)}) - \left(f^{(i)} +  \frac{1}{w}\sum_{k=1}^w \varphi(X_t^{(i)}, \theta_k) \right) \Delta_t\right )  \partial_{\theta_j, \theta_j} \varphi(X_t^{(i)}, \theta_j)  \biggr).
\end{align*}
with the off-diagonal second derivative as
$$
\partial_{\theta_k, \theta_j} \mathcal{L}^{(i)} = \frac{1}{w^2} \partial_{\theta_i}\varphi(X_t^{(i)}, \theta_k) \partial_{\theta_j}\varphi(X_t^{(i)}, \theta_j).
$$
The terms with coefficient $\frac{1}{w^2}$ will have coefficient $\frac{1}{w^3}$ when multiplied by the first partial derivative in $g$. 
Taking $w = \mathcal{O}(1/\Delta_t)$, these terms are negligible. With the choice of 
$$
\lambda_i =  \Delta_t \left( (X_{t+\Delta_t}^{(i)} - X_t^{(i)}) - \left(f^{(i)} + \frac1w \sum_{k=1}^w \varphi(X_t^{(i)}, \theta_k) \right)\Delta_t \right )^{-1}
$$
we obtain the PDE
$$
\frac{\partial^2 K}{\partial \theta^2} - \sum_{i=1}^N\frac{1}{w} \sum_{k=1}^w\partial_{\theta_k, \theta_k} \varphi(X_t^{(i)}, \theta_k) = 0.
$$
This suggests that the regularization problem that we are solving, repeating for $T$ time steps, is
\begin{equation}
\min_\theta K(\theta,X) = \sum_{t=1}^T\sum_{i=1}^N\frac1w \sum_{k=1}^{w}\varphi(X_t^{(i)}, \theta_k)\quad\textrm{s.t.}\quad\mathcal{L}(\theta,X)=0.
\label{eq:dual}
\end{equation}
In the context of the MV-SDE, the mean-field system approximated is the one that has the least influence from the other particles. 

\paragraph{$d$-dimensions.}
Now consider the case with $\theta_k$ as vectors.
The notation becomes more complex as the partial derivatives now form tensors. 
However, since the diffusion is assumed to be constant and diagonal, we can give a brief analysis similar to the 1-dimensional case. 
The loss function is now 
$$
\mathcal{L}(\theta, X) = \sum_{t=1}^T\sum_{i=1}^N\frac1{2\Delta_t^2} \sum_{j=1}^d \left( (X^{(i)}_{t+\Delta_t} - X^{(i)}_t) - \left( f(X_t^{(i)},t)  + \frac1w \sum_{k=1}^w \varphi(X_t^{(i)}, \theta_k) \right) \Delta_t \right)^2_j.
$$

The regularized problem has a similar form of
\begin{equation*}
\min_\theta K(\theta,X) = \sum_{t=1}^T\sum_{i=1}^{N}\frac1w \sum_{k=1}^{w}\sum_{j=1}^d  \varphi(X_t^{(i)} , \theta_k)_j\quad\textrm{s.t.}\quad\mathcal{L}(\theta,X)=0.
\end{equation*}
\end{proof}

\subsubsection{Comparison to optimal transport}
Now consider the case $f=0$ and recall that the transition density satisfies the PDE
\begin{equation}
\partial_t p(x,t) = -\nabla \cdot \left ( \int_\Omega \varphi (x, y) p(y, t) \mathrm{d}y p(x,t) \right ) + \frac{\sigma^2}{2} \nabla^2 p(x,t)
\label{eq:flow}
\end{equation}
such that $p(x,0) = p_0(x)$ and $p(x,T) = p_T(x)$.
Suppose that $\varphi$ can be represented as a norm $\|g(x,y)\|^2$ and replace the drift with the one given by the implicit bias, the PDE then becomes
\begin{align*}
\partial_t p(x,t) &= -\nabla \cdot \left ( \min_{\nu} \left( \int_\Omega \| g(x, y) \|^2 \nu(y, t)  \mathrm{d}y \right) p(x,t) \right ) + \frac{\sigma^2}{2}\nabla^2 p(x,t) \\
&= - \nabla \cdot \left ( \min_{\nu} \mathbb{E}_{y\sim\nu} \left [ \| g(x, y) \|^2 \right ]  p(x,t) \right ) + \frac{\sigma^2}{2}\nabla^2 p(x,t)\\
&= - \nabla \cdot \left ( \min_{g} g(x, t)  p(x,t) \right ) + \frac{\sigma^2}{2}\nabla^2 p(x,t)
\end{align*}
where the last step can be seen as a parameterization of the function $g$ by the measure $\nu$.

We see some similarities to the Benamou-Brenier form of the Wasserstein-2 distance, where the optimization problem is given by
\begin{equation}
W_2(\rho, \mu) = \min_{g} \int_0^T \mathbb{E}_{X_t \sim p(x,t)} \left[\|g(X_t,t)\|^2\right] \mathrm{d}t
\label{eq:bb}
\end{equation}
subject to 
\begin{equation}
\partial_t p = - \nabla \cdot \left ( g(x,t) p(x,t)\right ), \quad p_0(x) = \rho, \: p_T(x) = \mu.
\label{eq:bb-transport}
\end{equation}
Compare~\eqref{eq:dual} to~\eqref{eq:bb} where we have the same objective.
In addition, note that the probability flow~\eqref{eq:flow} satisfies the transport equation~\eqref{eq:bb-transport} in the limit as $\sigma \to 0$.
This lends to an interpretation that, under certain choices of $\varphi$, the problem relates to the entropy regularized optimal transport problem under the $W_2$ cost. 
Notably, this comes as a result of the implicit bias introduced by the neural network gradient optimization scheme and is not a separate term that needs to be added. 

\label{sec:add_formula}


\subsection{Compatibility criterion in inferring explicit distributional dependence}

\subsubsection{Feynman-Kac for the Kolmogorov backward equation}
\label{sec:kbk}

The Kolmogorov backward and forward equations are PDEs that describe the time evolution of the marginal density of the associated SDE. The Kolmogorov backward equation describes the evolution of the density when given a known terminal condition. Its adjoint, the Kolmogorov forward equation, establishes an initial condition and provides the density at some future time. In this section, we focus on regularizing the modeled density to be consistent with the flow of the modeled SDE using the Kolmogorov backward equation. In Section~\ref{sec:fk}, we derive a likelihood and perform additional generative modeling experiments based on a linearization of the Kolmogorov forward equation, also known as the Fokker-Planck equation.

For the modeled density to be consistent with the flow of the modeled SDE, it has to satisfy the Kolmogorov backward equation defined as
\begin{equation}\label{eq:kbe}
-\partial_t p_t = b(\cdot) \nabla p_t + \frac{\sigma^2}{2} \nabla^2 p_t.
\end{equation}
A solution to the above equation is given by the Feynman-Kac formula as an expectation of trajectories at terminal time, i.e.
\begin{equation}\label{eq:kbe_fk}
p_t(x) = \mathbb{E}\left[ p_T(X_T) \mid X_t = x \right]
\end{equation}
where $p_T(\cdot),\, t<T$ is the given terminal condition and $X_s$ satisfies the SDE $\mathrm{d}X_s = b(\cdot)\mathrm{d}s + \sigma \mathrm{d}W_s.$

Following~\eqref{eq:kbe_fk}, we evolve $X_s$ from $X_t=x$ to $X_T$, then penalize the difference between $p_t(x)$ and $\mathbb{E}\left[ p_T(X_T) \mid X_t = x \right]$. The estimation algorithm with this compatibility criterion on the marginal density is detailed in Algorithm~\ref{alg:cc}.

\subsubsection{Feynman-Kac analysis of the Brownian bridge estimator}\label{sec:proof_bb}
Consider the bridge estimator
$$
\mathcal{L}_{BB} = P(\{Z_{t_{j+1}} = X_{t_{j+1}}\} \mid Z_{t_{j}} = X_{t_{j}})=\mathbb{E}_{\mathbb{Q}} \left [\mathds{1}\{ Z_{t_{j+1}} = X_{t_{j+1}}\} \mid Z_{t_{j}} = X_{t_{j}}\right ]
$$
where the expectation is taken over Brownian paths $Z_t$ under the Wiener measure $\mathbb{Q}$. 
This computes the probability that a Brownian motion $Z_t$, conditioned to be equal to $X_{t_{j}}$ at $t_j$, is equal to $X_{t_{j+1}}$ at $t_{j+1}$. This can be thought of using the Kolmogorov backward equation and Feynman-Kac formula from the previous section. Applying a change of measure using Girsanov's theorem to a drifted Brownian motion, we arrive at the estimator described in the main text
$$
\mathcal{L}_{BB}(\theta) = \mathbb{E}_{\mathbb{Q}} \left [ \mathds{1}\{ Z_{t_{j+1}} = X_{t_{j+1}}\} \exp \left( \int_{t_{j}}^{t_{j+1}} b(\,\cdot\, ; \theta) \mathrm{d}Z_t - \frac12 \int_{t_{j}}^{t_{j+1}} b(\,\cdot\, ; \theta)^2 \mathrm{d}t \right) \mid Z_{t_{j}} = X_{t_{j}}\right ].
$$
The indicator function, which acts as the boundary condition for the Kolmogorov backward equation, restricts the paths of $\mathbb{Q}$ to those that are Brownian bridges between $X_{t_j}$ and $X_{t_{j+1}}$. The change of measure via Girsanov's provides the mechanism for inferring the optimal drift for the observed data.

The experiments then provide a way of evaluating whether including distributional properties in the drift (i.e. \emph{nonlinear} Kolmogorov backward equation with $b(X_t,p_t,t;\theta)$) results in better probabilities than without (i.e. linear Kolmogorov backward equation with $b(X_t,t;\theta)$). 

\subsection{Gradient flow minimizing the energy distance}
The energy distance is defined as
$$
E(p,q) = 2 \int \int \| X - Y\| \mathrm{d}q(X) \mathrm{d}p(Y) - \int \int \| X - X'\| \mathrm{d}q(X) \mathrm{d}q(X') - \| Y - Y'\| \mathrm{d}p(Y) \mathrm{d}p(Y')
$$
and by linearity of expectations, we can rewrite as
$$
E(p,q) = 2 \int \int \left (\underbrace{\| X - X'\| \frac{\mathrm{d}p(Y)}{\mathrm{d}q(X')}  - \| X - X'\| - \| X - X'\|\left( \frac{\mathrm{d}p(Y)}{\mathrm{d}q(X')}\right )^2}_{\varphi}\right )\mathrm{d}q(X) \mathrm{d}q(X').
$$
Our goal is to find a gradient flow that minimizes this distance. 
Following~\citet{santambrogio2017euclidean}, we can use the gradient of this function as the drift to promote attraction to the target density.
We will use the form within the parenthesis as $\varphi$ for representing the aggregation potential, $\mathcal{W}(p) = \int \int \varphi(X, X') \mathrm{d}p(X) \mathrm{d}p(X')$.
Additionally, we include the Radon-Nikodym derivative for this between the target density and the current time density to define the energy distance.
Applying the gradient to $\varphi$, we achieve the desired result.
While we do not explicitly impose this, we use this idea to motivate our results on generative modeling.

\clearpage

\section{Algorithms}

To supplement the algorithmic contributions in the main paper we detail the inference procedure for the regular time observations in Algorithm~\ref{alg:girsanov} and the irregular time observations with Brownian bridges in Algorithm~\ref{alg:bridge}. 
We then detail the inference procedure with regularization of the marginal law using a compatibility criterion of the PDE with the associated SDE in Algorithm~\ref{alg:cc}. 
Finally, we describe a sampling procedure in Algorithm~\ref{alg:sampling}. 
The code is attached in the supplementary material and will be posted online.

\begin{algorithm}[h]
\caption{Maximum Likelihood Estimation (MLE) with Girsanov's Theorem}
\label{alg:girsanov}
\begin{algorithmic}
\STATE {Input: observed trajectories $\left\{\{X_{t_j}\}_{j=1\ldots K}^{(i)}\right\}_{ i=1\ldots N}$.}
\STATE {Initialize: neural drift $b(\cdot;\theta)$.}
\FOR{$i$ in mini-batch}
\FOR{$j$ in $1...K-1$}
     \STATE Compute $\Delta X_{t_j}^{(i)} = X_{t_{j+1}}^{(i)} - X_{t_{j}}^{(i)}$.
     \STATE Compute discretized approximation to $\log$ of exponential martingale: \\
     $\mathcal{L}(\theta) := b(X_{t_j}^{(i)},p_{t_j},t_j;\theta) \Delta X_{t_j}^{(i)}- \frac{1}{2} b(X_{t_j}^{(i)},p_{t_j},t_j;\theta)^2 (t_{j+1} - t_{j}).$
     \STATE Maximize $\mathcal{L}(\theta)$ using gradient based optimizer.
\ENDFOR
\ENDFOR
\end{algorithmic}
\end{algorithm}

In the computation of the mean-field component of  $b(X_{t_j}^{(i)},p_{t_j},t_j;\theta)$, we do the following:
\begin{itemize}
    \item EM architecture: $\frac{1}{n}\sum_{k=1}^n\varphi(X_{t_j}^{(i)},X_{t_j}^{(k)};\theta)$.
    \item IM architecture: $\frac{1}{n}\sum_{k=1}^{n}\varphi(X_{t_j}^{(i)},W_{0}^{(k)},t_j;\theta)$\\ as a neural network with an additional layer and additional conditioning on $t_j$.
    \item ML architecture: $\frac{1}{n}\sum_{k=1}^{n}\varphi(X_{t_j}^{(i)},\hat{X}_{t_j}^{(k)};\theta)$ \\
    where we compute the expectation with samples $\{\hat{X}_{t_j}^{(k)}\}_{k=1}^{n}$ from $\hat{p}(\cdot,t_j;\phi)$,\\ a generative network with additional conditioning on $t_j$.
\end{itemize}

Additional details on the parameterization of the neural architectures are in Section~\ref{sec:exp_params}.

In the case of irregular time observations, for each trajectory, we first sample Brownian bridges between observations, then use the sampled Brownian bridges as regular time observations. 
In this case, the estimation procedure aims to fit the observations while penalizing deviations from the Brownian bridge paths in regions without observations. 
The Brownian bridge approach also has the interpretation of the shortest distance interpolator that exactly fits the margins.
Using a Brownian bridge path construction reduces the variance of the estimator.

\begin{algorithm}[h]
\caption{MLE with Girsanov and Brownian Bridges}
\label{alg:bridge}
\begin{algorithmic}
\STATE {Input: observed trajectories $\left\{\{X_{t_k}\}_{k=1\ldots K}^{(i)}\right\}_{ i=1\ldots N}$.}
\STATE {Initialize: neural drift $b(\cdot;\theta)$.}
\FOR{$i$ in mini-batch}
\FOR{$k$ in $1...K-1$}
     \STATE Sample Brownian bridge $\{Z_{t_{j}}\}^{(i)}_{j=1...J}$ between $X_{t_{k}}^{(i)}$ and $X_{t_{k+1}}^{(i)}$.
     \FOR{$j$ in $1...J-1$}
     \STATE Compute $\Delta Z_{t_j}^{(i)} = Z_{t_{j+1}}^{(i)} - Z_{t_{j}}^{(i)}$.
     \STATE Compute discretized approximation to $\log$ of exponential martingale: \\
     $\mathcal{L}(\theta) := b(Z_{t_j}^{(i)},p_{t_j},t_j;\theta) \Delta Z_{t_j}^{(i)}- \frac{1}{2} b(Z_{t_j}^{(i)},p_{t_j},t_j;\theta)^2 (t_{j+1} - t_{j}).$
     \STATE Maximize $\mathcal{L}(\theta)$ using gradient based optimizer.
     \ENDFOR
\ENDFOR
\ENDFOR
\end{algorithmic}
\end{algorithm}

We next detail the estimation procedure with regularization of the marginal law using the correspondence between the PDE and its associated SDE via the nonlinear Kolmogorov backwards equation~\citep{buckdahn2017mvpde}.

\begin{algorithm}[h]
\caption{MLE with Girsanov and Regularization of Explicit Marginal Law $\hat{p}_t$}
\label{alg:cc}
\begin{algorithmic}
\STATE {Input: observed trajectories $\left\{\{X_{t_j}\}_{j=1\ldots J}^{(i)}\right\}_{ i=1\ldots N}$.}
\STATE {Initialize: neural drift $b(\cdot;\theta,\phi)$, including explicit marginal law $\hat{p}(\cdot;\phi).$}
\FOR{$i$ in mini-batch}
\FOR{$j$ in $1 ... J-1$}
     \STATE Compute $\Delta X_{t_j}^{(i)} = X_{t_{j+1}}^{(i)} - X_{t_{j}}^{(i)}$.
     \STATE Compute discretized approximation to $\log$ of exponential martingale: \\
     $\text{ELBO} := b(X_{t_j}^{(i)},p_{t_j},t_j;\theta,\phi) \Delta X_{t_j}^{(i)}- \frac{1}{2} b(X_{t_j}^{(i)},p_{t_j},t_j;\theta,\phi)^2 (t_{j+1} - t_{j}).$
     \STATE Sample $\{\{Z_{t_{j+1}}|z=X_{t_j}^{(i)}\}^{(k)}\}_{k=1...K}$ following the dynamics of the ML architecture.
     \STATE Compute the expected log-likelihood $\mathbb{E}[\log \hat{p}_{t_{j+1}}(Z_{t_{j+1}};\phi)]=\frac{1}{K}\log \hat{p}_{t_{j+1}}(Z_{t_{j+1}}^{(k)};\phi)$.
     \STATE Compute compatibility criterion $\text{CC} := (\log \hat{p}_{t_j}(X_{t_j}^{(i)};\phi) - \mathbb{E}[\log \hat{p}_{t_{j+1}}(Z_{t_{j+1}};\phi)])^2$
    \STATE Compute total loss
     $\mathcal{L}(\theta) := \text{ELBO} + \text{CC}$.
     \STATE Maximize $\mathcal{L}(\theta)$ using gradient based optimizer.
\ENDFOR
\ENDFOR
\end{algorithmic}
\end{algorithm}

We finally describe a sampling algorithm.

\begin{algorithm}[h]
\caption{Sampling Trajectories with Euler-Maruyama Scheme }
\label{alg:sampling}
\begin{algorithmic}
\STATE {Initialize: time grid $\{t_j\}_{j=1...K}$.}
\STATE {Initialize: initial observations $\{X_0^{(i)}\}_{i=1...N} \sim p_0$.}
\FOR{$j$ in $1...K-1$}
    \STATE Compute $\Delta t_j=t_{j+1}-t_j$.
    \FOR{$i$ in $1...N$}
    \STATE Sample $\Delta W_{t_j}^{(i)}\sim_{iid}\mathcal{N}(0,\Delta t_j)$
    \STATE Compute $X_{t_{j+1}}^{(i)} = b(X_{t_j}^{(i)},p_{t_j},t_j;\theta)\Delta t_j + \sigma \mathrm{d}W_{t_j}^{(i)}$.
    \ENDFOR
\ENDFOR
\end{algorithmic}
\end{algorithm}

\clearpage

\section{Experimental details}

In this section, we detail the evaluation metrics, datasets, hyperparameter settings, and provide additional experiments to supplement the results in the main paper.

\subsection{Evaluation metrics}

\subsubsection{Continuous ranked probability score}
Following~\citet{Gneiting2007CRPS}, the continuous ranked probability score (CRPS) is given by
\begin{align*}
    \rm{CRPS}(F,x) &= \int_{-\infty}^{\infty} \left[F(y) - \mathds{1}(y \geq x)\right]^2 \mathrm{d}y.
\end{align*}
The CRPS evaluates the modeled distribution against a single observation by comparing the cumulative distribution function (CDF) of the modeled distribution $F$ to a step function placed at the observation $x$.

\subsubsection{Energy distance}
\label{sec:energy_dist}
The squared energy distance between two distributions $P_0$ and $P$ is defined as 
$$
d^2(P_0,P) := 2 \, \mathbb{E}_{X \sim P_0, Y \sim P}[ \| X - Y \|] - \mathbb{E}_{X \sim P_0, X' \sim P_0}[ \| X - X' \|] 
- \mathbb{E}_{Y \sim P, Y' \sim P}[ \| Y - Y' \|]
$$
where we compute the expectations empirically. 

\subsubsection{KS-distance}
The KS-distance between two 1-d empirical cumulative distribution functions (ECDF) is defined as:
$$
\text{KS-distance} = \sup_x|F_1(x) - F_2(x)|
$$
where $F_1, F_2$ are two ECDFs. This metric is only used in the appendix for additional 1-dimensional experiment results. 


\subsection{Datasets}
Here we describe the datasets in more detail and provide exact statements on the simulation parameters.

\subsubsection{Synthetic time series data}

\paragraph{Kuramoto model.} 
The Kuramoto model which describes synchronizing oscillators takes the form $$
    \mathrm{d}X_t^{(i)} = \left[ h^{(i)} + \frac{K}{N} \sum_{j=1}^N \sin\left(y_t^{(j)} - X_t^{(i)}\right) \right]\mathrm{d}t + \sigma \mathrm{d}W_t^{(i)},$$ where movements of $N$ particles are governed by a linearly factored drift that includes some function $h^{(i)}$ and a mean-field term that couples the particles. 
    We simulate 2-dimensional trajectories with $X_t^{(i)}=[X_{1t}^{(i)},X_{2t}^{(i)}]\in\mathbb{R}^2$, $h^{(i)} = \left[\sin(X_{1t}^{(i)}), \sin(X_{2t}^{(i)}) \right]$,  $K=2$, $N$=20, and $\sigma=1$. 

\paragraph{Fitzhugh-Nagumo model.} 
The FitzHugh-Nagumo model is a set of equations that models spikes in neuron activations as membrane voltage spikes $X_{1t}$, driven by external stimulus $I_{\mathrm{ext}}$, and diminishing over time $X_{2t}$. It takes the form 
    \begin{align*}
    \mathrm{d}X_{1t} &= \left(aX_{1t}\left(X_{1t} - \lambda\right)\left(1-X_{1t}\right)-X_{2t} + I_{\mathrm{ext}} \right) \mathrm{d}t + \mathbb{E}\left[X_{1t} - y_{1t}\right] \mathrm{d} t + \sigma \mathrm{d}W_t, \\
    \mathrm{d}X_{2t} &= \left(-bX_{2t}+cX_{1t} + d \right)\mathrm{d}t,
    \end{align*}
    We chose $a=0.2, b = 0.8, c=1, d=0.7, \lambda=0.4, I_{\mathrm{ext}} = 0.1\sin(10t)$, and $\sigma=0.3$. The expectation is approximated with $N=20$ particles. 

\paragraph{Opinion dynamic model.}
The opinion dynamic model simulates the opinion formation process through an equation with the form $$dX_t=\mathbb{E}\left[\psi_\theta(||X_t - y_t||)(X_t - y_t)\right] + \sigma \mathrm{d}W_t,$$ where $\psi_\theta(r) = \theta_1 \exp (-\frac{0.01}{1-(r-\theta_2)^2})$. We simulate 2-dimensional trajectories with $\theta_1 = 1, \theta_2 = 2.5$.

\paragraph{Mean-field atlas model.} 
The mean-field atlas model for pricing equity markets takes the form $$\mathrm{d}X_t = \gamma\left ( \int \mathds{1}_{\left\{X_t - y_t > 0\right\}} \mathrm{d}p_t(y)\right) \mathrm{d}t + \sigma \mathrm{d}W_t,$$ 
where the drift $\gamma(\cdot)$ depends on the rank of the particle at each time. 
Let $u = \int \mathds{1}_{\left\{X_t - y_t > 0\right\}} \mathrm{d}p_t(y) \mathrm{d}t$, we define $\gamma = 1 - u\exp(2u)$.

\paragraph{It\^o diffusion -  Ornstein-Uhlenbeck.}
We simulated a 2-dimensional Ornstein-Uhlenbeck (OU) process with drifts $\left [-3X_{1t}, -2X_{2t}\right ].$

\paragraph{It\^o diffusion -  circle.}
We simulated a 2-dimensional SDE with circular evolution given by drifts $\left [-X_{1t} - 2X_{2t}, -X_{2t}+ 2X_{1t}\right ]$.

\paragraph{Jump diffusions.}
We simulated a 2-dimensional OU process with drifts $\left [-X_{1t}, -X_{2t}\right ]$ and additional 1, 2, or 4 jumps sampled uniformly in time with jump size distributed as $\exp(\mathrm{Uniform(2,3)})$.

All models are two-dimensional except the mean-field atlas model that is one-dimensional.

We first simulated samples using the Euler-Maruyama method on a fine grid $\Delta t$, i.e. $X_{t+\Delta t} = X_t + b(X_t, p_t, t)\Delta t + \sigma\Delta W$ with $\Delta W \sim \mathcal{N}(0, \Delta_t)$ and $t\in[0,T]$. For irregular time samples, a batch of observation times are then sampled according to an exponential distribution with rate $\lambda = T/N^\prime$, where $N^\prime$ is the number of irregular time samples.
The sampled timestamps are then matched to the closest times in the discretized time sequence used in sample generation. 
Only the matched timestamps $t^\prime$, the initial condition $X_0$, and the terminal condition $X_T$ are used in training. For evaluation, we consider the full trajectories. Specific choices of $\sigma, T, \Delta t, N$, and $N^\prime$ are provided in Table \ref{tab:synthetic_data}.
To realistically simulate real-world parameter estimation, ``observation noise" in the form of Gaussian with standard deviations $\in \{0.1, 0.5, 1\}$ is added to the sampled data. 

\begin{table}[h]
\small
    \centering
    \begin{tabular}{llllll}
    Dataset & $\sigma$ & Terminal Time $T$ & $\Delta t$ & \# Particles $N$ & \# Irregular Observation $N^\prime$\\
    \toprule
    Kuramoto           &  1  & 5   & 0.05    & \multirow{6}*{20}  & \multirow{6}*{20} \\ 
    Fitzhugh-Nagumo    & 0.3 & 5   & 0.05    &                    &  \\ 
    Opinion Dynamic    & 0.5 & 100 & 1.0     &                    &  \\ 
    Mean-field Atlas    & 1   & 5   & 0.05    &                    &  \\ 
    Ornstein-Uhlenbeck & 1   & 5   & 0.05    &                    &  \\ 
    Circles    & 1   & 5   & 0.05    &                    &  \\ 
    OU with Jumps            & 1   & 5   & 0.05    &   100              &  Not Applicable\\ 
    \bottomrule
    \end{tabular}
\vspace{5pt}\caption{Synthetic time series parameters}
\label{tab:synthetic_data}
\end{table}

\subsubsection{Real time series data}

\paragraph{EEG data.}
We used the 1-dimensional EEG data provided by \cite{zhang1995eeg}. Specifically, the EEGs recorded with stimulus 1. Each subject has 64 time series, and each time series has 256 timesteps. We used the following subject-run combinations for Non-Alcoholics EEG (NA-EEG): co2c0000362-076, co2c0000367-052, co2c0000338-016, co2c0000394-044, co2c0000348-016; and these subject-run combinations for Alcoholics EEG (A-EEG): co2a0000364-000, co2a0000372-014, co2a0000396-112, co2a0000411-064, co2a0000390-030.
We did not perform any further preprocessing on this dataset.

\paragraph{Crowd trajectory data.}
We used the 2-dimensional Crowd trajectory data provided by \cite{kothari2021human}. We used the crowd\textunderscore students001.ndjson dataset and extracted every trajectory that starts at the 10th frame with a trajectory length longer than 50 frames. We truncate the data by taking only up to the 50th frame. This leaves 23 pedestrian trajectories to train with. Trajectories that did not start at the 10th frame, or shorter than 50 frames are discarded. 

\paragraph{Chemotaxi data.}
We used the 3-dimensional Chemotaxi data provided by \cite{grognot2021multiscale}. We used \emph{V\textunderscore0208} for C.Crescentus and \emph{V\textunderscore{MeAsp1}\textunderscore{0511}} for E.Coli. The time series are truncated to the first 100 timesteps. Particles with less than 100 timesteps recorded are discarded. 

All time series data are split into 0.8 training, 0.1 validation, and 0.1 testing particles. 

\subsubsection{Generative data} \label{sec:gendata}
We are interested in estimating a flow between a Gaussian and a target distribution described by the nonlinear Fokker-Planck equation.
We thus sample a batch of $N=100$ particles from the initial condition $\mathcal{N}(0, I_{d \times d})$ where $d$ is the dimensions of the process. 
We then sample the same number of particles from different terminal conditions corresponding to the different datasets, i.e. Gaussian mixture and UCI datasets: Power, Miniboone, Hepmass, Gas and Cortex. 
To create the training dataset, we randomly match the particles from the initial condition to the particles from the terminal condition, then sample $N_{BB}=30$ Brownian bridges between each initial-terminal condition pair for $t\in[0,T], T=0.1, \Delta t = 0.002$. 

\paragraph{Eight Gaussians.} 
In the case of two dimensions, the terminal condition is an eight Gaussian mixture with means $\mu \in \left\{ [0,2],[0,-2],[2,0],[0,-2],[\sqrt{2},\sqrt{2}],[\sqrt{2},-\sqrt{2}],[-\sqrt{2},\sqrt{2}],[-\sqrt{2},-\sqrt{2}] \right\}$ and variance $I_{d\times d}$. 
For dimensions 10, 30, 50, 100, $\mu$ is repeated 5, 15, 25, and 50 times. 
\paragraph{Real data.} 
For \textsc{Power}, \textsc{Miniboone}, \textsc{Hepmass}, and \textsc{Gas}, we follow the preprocessing of~\citet{grathwohl2018scalable}. 
For the \textsc{Cortex} data, we normalize the data by 
subtracting the mean and dividing by the standard deviation. 

\subsection{Hyperparameter settings}
\label{sec:exp_params}
Since our goal is to determine the effect of different architectures, we try to control such that all architectures have similar number of parameters. 
The details of the hyperparameters are specified in Tables~\ref{tab:synthetic_nn_1},~\ref{tab:EEG_nn},~\ref{tab:Crowd_nn}, ~\ref{tab:Chemo_nn} and~\ref{tab:realgen_nn} for the different datasets.
The learned measure $W_0$ in the IM architecture was modeled as an additional fully connected layer. 
The marginal law $\hat{p}_t$ in the ML architecture was modeled with GLOW~\citep{Kingma2018glow} with an additional conditioning on time. 

For the MV-SDE models, we used the AdamW optimizer with a learning rate of $1 \times 10^{-4}$, $\epsilon=1 \times 10^{-4}$ and exponential decay $\gamma=0.9998$ for all experiments, except EEG where the learning rate was $1 \times 10^{-3}$. For the DeepAR models, the learning rate was $1 \times 10^{-3}$. 

The batch sizes used were $10, 5, 5, 10$ and $200$ for the synthetic time series, EEG, Crowd Trajectory, Chemotaxis and generative modeling experiments respectively. 

The models were trained for $500, 500, 200, 2000$, and $500$ epochs for the synthetic time series, EEG, Crowd Trajectory, Chemotaxis and generative modeling experiments.

\begin{table}[h!]
\small
    \centering
    \begin{tabular}{@{}lllll@{}}
        \toprule
    Architecture & Hidden Layers & Layer Size & Activation & \# of Parameters  \\ \midrule 
    MLP (It\^o)   & 8  & 128 & \multirow{4}*{$\mathrm{LeakyReLU}$} & 132740\\ 
    EM & $\varphi$: 4, $f$: 4 & 128, 128 &  & 133638 \\ 
    IM & $\varphi$: 4, $f$: 4, $W_0$: 1 & 128, 128, 128  &  & 134022 \\
    ML   & $\varphi$: 4, $f$: 4, $\hat{p}_t$: 1 & 128, 128, 32   & s: $\tanh$, t: $\mathrm{ReLU}$ & 136152   \\ \midrule
    MLP (It\^o)   & 4  & 128 & \multirow{4}*{$\mathrm{LeakyReLU}$} & 66820\\ 
    EM & $\varphi$: 2, $f$: 2 & 128, 128 &  & 67718 \\ 
    IM & $\varphi$: 2, $f$: 2, $W_0$: 1 & 128, 128, 128  &  & 67846 \\
    ML   & $\varphi$: 2, $f$: 2, $\hat{p}_t$: 1 & 128, 128, 32   & s: $\tanh$, t: $\mathrm{ReLU}$ & 70232  \\ \bottomrule  
    \end{tabular}
\vspace{5pt}\caption{Hyperparameter specification for synthetic time series data experiments and synthetic generative modeling experiments. The first set of hyperparameter settings are for: Kuramoto, Opinion Dynamic, Mean-field Atlas, Jump Diffusions, and Eight Gaussians. The second set of hyperparameter settings are for: Fitzhugh-Nagumo, It\^o-OU, It\^o-Circles. 
For the jump diffusion experiments, the $\mathrm{LeakyReLU}$ activation on the EM architecture led to diverging behavior, while $\mathrm{tanh}$ did not. We then changed the activation to $\mathrm{tanh}$ for more stable behaviour. }
\label{tab:synthetic_nn_1}
\end{table}

\begin{table}[h!]
\small
    \centering
    \begin{tabular}{lllll}
    Architecture & Modules: Hidden Layers & Layer Size & Activation & \# of Parameters \\
    \toprule
    MLP (It\^o)   & 10     & 64    & \multirow{4}*{$\mathrm{LeakyReLU}$} & 141858\\ 
    EM & $\varphi$: 4, $f$: 4 & 64, 64    &         & 133795  \\ 
    IM & $\varphi$: 4, $f$: 4, $W_0$: 1 & 64, 64, 512 &  & 134371        \\
    ML   & $\varphi$: 4, $f$: 4, $\hat{p}_t$: 3 & 64, 64, 32      &  s: $\tanh$, t: $\mathrm{ReLU}$   & 135960       \\ \midrule
    DeepAR-LSTM & \multirow{3}*{3} & 64 & \multirow{3}*{$\mathrm{LeakyReLU}$} & 117894\\
    DeepAR-RNN &  & 130 & & 120516\\
    DeepAR-GRU &  & 80 & & 137526\\
    DeepAR-TR & Enc: 8, Dec: 8 & 512 & ReLU & 298082 \\
    \bottomrule
    \end{tabular}
\vspace{5pt}\caption{Hyperparameter specification for real time series data - EEG experiments. For the DeepAR models, we used a window size of 20. }
\label{tab:EEG_nn}
\end{table}

\begin{table}[h!]
\small
    \centering
    \begin{tabular}{lllll}
    Architecture & Modules: Hidden Layers & Layer Size & Activation & \# of Parameters \\
    \toprule
    MLP (It\^o)   & 8     & 128    & \multirow{4}*{$\mathrm{LeakyReLU}$} & 133126\\ 
    EM & $\varphi$: 4, $f$: 4 & 128, 128    &         & 134409     \\ 
    IM & $\varphi$: 4, $f$: 4, $W_0$: 1 & 128, 128, 128 &  & 134921        \\
    ML  & $\varphi$: 4, $f$: 4, $\hat{p}_t$: 3 & 128, 128, 128      &  s: $\tanh$, t: $\mathrm{ReLU}$   &  239724     \\ \midrule
    DeepAR-LSTM & \multirow{3}*{3} & 64 & \multirow{3}*{$\mathrm{LeakyReLU}$} & 117123\\
    DeepAR-RNN &  & 130 & & 119733\\
    DeepAR-GRU &  & 80 & & 136723\\
    DeepAR-TR & Enc: 4, Dec:4 & 256 & ReLU & 81638 \\
    \bottomrule
    \end{tabular}
\vspace{5pt}\caption{Hyperparameter specification for real time series data - Chemotaxi experiments. For the DeepAR models, we used a window size of 20.}
\label{tab:Chemo_nn}
\end{table}

\begin{table}[h!]
\small
    \centering
    \begin{tabular}{lllll}
    Architecture & Modules: Hidden Layers & Layer Size & Activation & \# of Parameters \\
    \toprule
    MLP (It\^o)   & 2     & 64    & \multirow{4}*{$\mathrm{LeakyReLU}$} & 8644\\ 
    EM & $\varphi$: 1, $f$: 1 & 64, 64    &         & 9094     \\ 
    IM & $\varphi$: 1, $f$: 1, $W_0$: 1 & 64,64,64 &  & 9286        \\
    ML  & $\varphi$: 1, $f$: 1, $\hat{p}_t$: 1 & 64, 64, 128      &  s: $\tanh$, t: $\mathrm{ReLU}$   &  43672     \\ \midrule
    DeepAR-LSTM & \multirow{3}*{3} & 64 & \multirow{3}*{$\mathrm{LeakyReLU}$} & 117510\\
    DeepAR-RNN &  & 130 & & 120126\\
    DeepAR-GRU &  & 80 & & 137126\\
    DeepAR-TR & Enc: 8, Dec:8 & 256 & ReLU & 298148 \\
    \bottomrule
    \end{tabular}
\vspace{5pt}\caption{Hyperparameter specification for real time series data - Crowd Trajectory experiments. For the DeepAR models using a window size of 10.}
\label{tab:Crowd_nn}
\end{table}

\begin{table}[h!]
\footnotesize
    \centering
    \resizebox{\textwidth}{!}{%
    \begin{tabular}{lllll}
    Architecture & Modules: Hidden Layers & Layer Size & Activation & \# of Parameters  \\
    \toprule
    MLP (It\^o)   & 8  & 128 & \multirow{4}*{$\mathrm{tanh}$} & 133900 $\sim$ 151960\\ 
    EM & $\varphi$: 4, $f$: 4 & 64, 128 &  & 86098 $\sim$ 122148 \\ 
    IM & $\varphi$: 4, $f$: 4, $W_0$: 1 & 64, 128, 128 &  & 86930 $\sim$ 131940\\
    ML   & $\varphi$: 4, $f$: 4, $\hat{p}_t$: 1 & 64, 128, 32   & s: $\tanh$, t: $\mathrm{ReLU}$ & 89208 $\sim$ 146048  \\ \midrule
    MAF   & 4 & 128 & $\mathrm{ReLU}$ & 75872 $\sim$ 184512\\
    WGAN & Gen: 4, Dis: 3  & Gen: [64, 128, 256], Dis: 256 & $\mathrm{LeakyReLU}$ & 114759 $\sim$ 150669\\
    VAE & Enc: 4, Dec: 4 & 128, 256, latent dim: 50 & $\mathrm{LeakyReLU}$ & 88682 $\sim$ 124592\\
    Score-Based & 8 & 128 & $\mathrm{SiLU}$ & 117318 $\sim$ 135308\\
    \bottomrule
    \end{tabular}
    }
\vspace{5pt}\caption{Hyperparameter specification for generative modeling experiments: Power, Miniboone, Hepmass, Gas and Cortex. The number of parameters depends on the dimension of the data. The hyperparameter specification for the generative modeling experiments with Eight Gaussians follow that of Table~\ref{tab:synthetic_nn_1}. }
\label{tab:realgen_nn}
\end{table}

\clearpage

\subsection{Additional figures and tables}
We provide a series of additional figures to qualitatively illustrate the differences between the proposed architectures and baselines.

\subsubsection{Ablation on IM architecture width}
\label{sec:w0_ablation}
We conduct a series of ablations on the width of the IM architecture.
These ablations are performed on the synthetic datasets of the Kuramoto model and the Fitzhugh-Nagumo model. 
The results are presented in Figure~\ref{fig:kura_ablat} and~\ref{fig:fitz_ablat}.

\begin{figure}[h!]
    \centering
    \includegraphics[width=\textwidth]{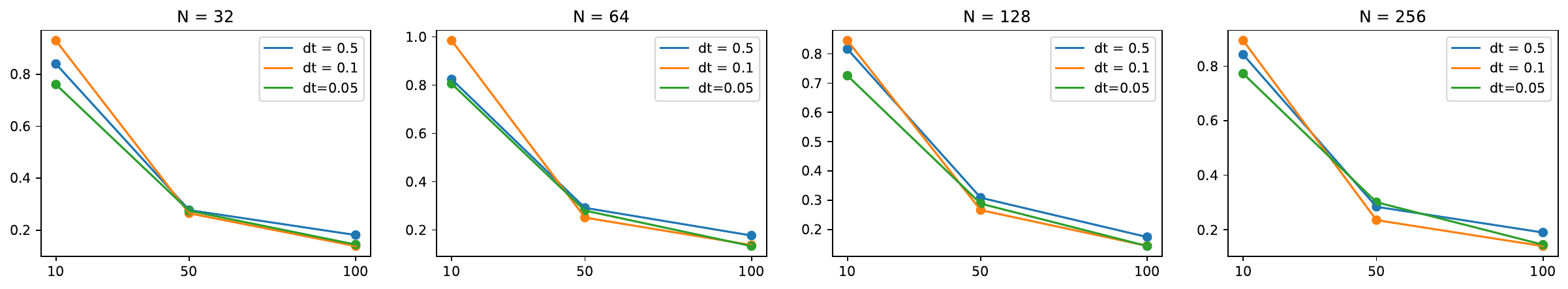}
    \caption{Ablation on different IM architecture widths $N=32, 64, 128, 256$ for the Kuramoto model, with different time grid size $d t$ and different number of training particles.}
    \label{fig:kura_ablat}\vspace{5pt}
    \includegraphics[width=\textwidth]{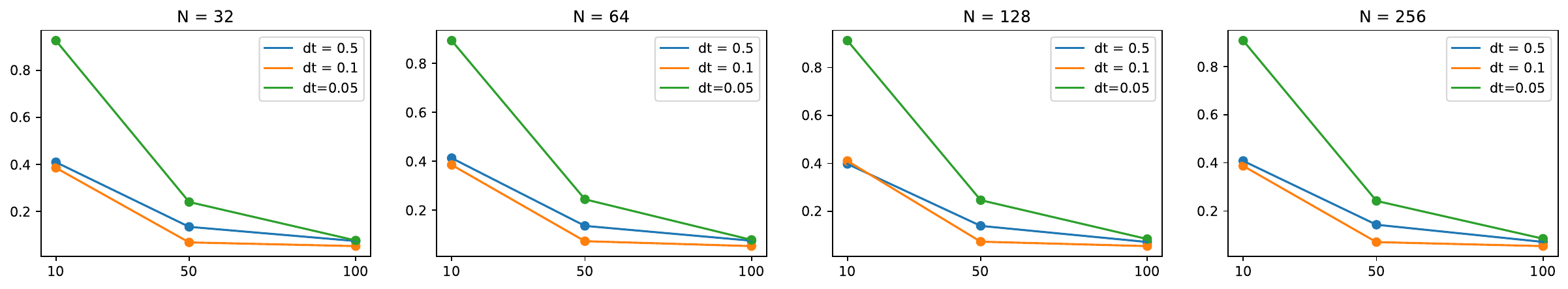}
    \caption{Ablation on different IM architecture widths $N=32, 64, 128, 256$ for the Fitzhugh-Nagumo model, with different time grid size $d t$ and different number of training particles.}
    \label{fig:fitz_ablat}
\end{figure}

\subsubsection{Ablation on IM architecture number of training particles}
\label{sec:numparticle_ablation}
In Table \ref{tab:ks_numpablation} we show that as the number of training particles grows, the KS distance will approach 0. 
Note that since KS-statistic only applies to 1-dimensional ECDFs.
We consider a 1-d Kuramoto simulation with the same parameters and no additional observation noise. 

\begin{table}[h!]
\centering
\begin{tabular}{@{}lllll@{}}
    \toprule 
 $N$        & mean  & 75\%          & 90\% & KS\\\midrule
10      & 0.0481 (0.023) & 0.0794 (0.041) & 0.1021 (0.042) & 0.1222 (0.041)        \\
50      & 0.0272 (0.018) & 0.0434 (0.033) & 0.0609 (0.038) & 0.0792 (0.038)        \\
100      & 0.0177 (0.006) & 0.0278 (0.009) & 0.0383 (0.009) & 0.0544 (0.008)        \\
500      & 0.0162 (0.005) & 0.0249 (0.01) & 0.0352 (0.01) & 0.0519 (0.009)        \\\bottomrule
\end{tabular}
\caption{ECDF distance of 1d Kuramoto under $N = 10, 50, 100$ and $500$ training particles and $500$ test particles. The values represent the distance between the ECDFs averaged over all time marginals at the mean, 75th, 90th, and 100 percentiles.  }
\label{tab:ks_numpablation}
\end{table}

\subsubsection{Synthetic data experiments}
We provide tables with variance for the synthetic dataset result showcased in the main text. We also provide a KS-distance table for the 1-dimensiaonl mean-field atlas experiment. KS-distance is only applicable in 1-dimensional cases. 
\begin{table}[h!]
\centering
\caption{Synthetic dataset results with noise level standard deviation 0.1}\vspace{5pt}
    \footnotesize
    \label{tab:syntheticdataTS_0.1}
    \begin{tabular}{lllllll}
    \toprule 
                        & Kuramoto       & Fitzhugh  & OD       & MA  & OU & Circle  \\ \midrule
    MLP (It\^o) & 0.56 (0.081) & 0.699 (0.426) &0.048 (0.013)& 2.14 (0.142) & 0.098 (0.043) & 1.351 (1.979)\\ 
    IM          & 0.448 (0.075)& 0.601 (0.422) &0.039 (0.011)& 1.208 (0.147)& 0.128 (0.047) & 1.592 (2.38) \\
    ML          & 0.428 (0.095)& 0.639 (0.395) &0.042 (0.012)& 1.519 (0.236)& 0.101 (0.038) & 1.481 (2.237) \\
    EM          & 0.383 (0.085)& 0.606 (0.389) &0.036 (0.01) & 1.359 (0.2)  & 0.097 (0.038) & 1.562 (2.476) \\\bottomrule
    \end{tabular}
\end{table}

\begin{table}[h!]
\centering
\caption{Synthetic dataset results with noise level standard deviation 0.5}\vspace{5pt}
    \footnotesize
    \label{tab:syntheticdataTS_0.5}
    \begin{tabular}{lllllll}
    \toprule 
                        & Kuramoto       & Fitzhugh  & OD       & MA  & OU & Circle  \\ \midrule
    MLP (It\^o) & 0.578 (0.124)& 0.734 (0.489) &0.047 (0.012)& 2.133 (0.156)& 0.1 (0.042)   & 1.334 (1.948)\\ 
    IM          & 0.45 (0.074) & 0.617 (0.415) &0.039 (0.012)& 1.223 (0.125)& 0.128 (0.046) & 1.592 (2.368)\\
    ML          & 0.397 (0.075)& 0.605 (0.408) &0.042 (0.011)& 1.518 (0.248)& 0.1 (0.035)   & 1.564 (2.543) \\
    EM          & 0.373 (0.075)& 0.612 (0.383) &0.038 (0.01) & 1.347 (0.165)& 0.106 (0.036) & 1.535 (2.535) \\\bottomrule
    \end{tabular}
\end{table}

\begin{table}[h!]
\centering
\caption{Synthetic dataset results with noise level standard deviation 1.0}\vspace{5pt}
    \footnotesize
    \label{tab:syntheticdataTS_1.0}
    \begin{tabular}{lllllll}
    \toprule 
                        & Kuramoto       & Fitzhugh  & OD       & MA  & OU & Circle  \\ \midrule
    MLP (It\^o) & 0.653 (0.067)& 0.897 (0.503)&0.059 (0.009)& 2.159 (0.2)  & 0.481 (0.065) &2.303 (2.039)\\ 
    IM          & 0.646 (0.065)& 0.878 (0.522)&0.055 (0.012)& 1.65 (0.232) & 0.559 (0.062) &2.658 (2.142)\\
    ML          & 0.601 (0.112)& 0.882 (0.527)&0.049 (0.009)& 1.748 (0.224)& 0.529 (0.055) &2.308 (2.252) \\
    EM          & 0.592 (0.077)& 0.893 (0.523)&0.04 (0.009) & 1.652 (0.272)& 0.536 (0.055) &2.394 (2.25) \\\bottomrule
    \end{tabular}
\end{table}

\begin{table}[h!]
\centering
\caption{ECDF distance of mean-field atlas process under 20 training particles and 20 irregular time samples with different noises. The result is tested with 100 test particles and averaged across 10 runs.}
\label{tab:ksdist_mfatlas}
\begin{tabular}{@{}lllll@{}}
    \toprule 
         & mean ECDF dist  & 75\% ECDF dist & 90\% ECDF dist & KS\\\midrule
Noise = 1.0 & & &  \\ \midrule
 MLP & 0.1 (0.01) & 0.16 (0.03) & 0.27 (0.03) & 0.35 (0.03)\\
 IM  & 0.05 (0.01) & 0.07 (0.02) & 0.13 (0.03) & 0.21 (0.03)\\
 ML  & 0.05 (0.02) & 0.07 (0.03) & 0.14 (0.05) & 0.24 (0.03)\\
 EM  & 0.05 (0.01) & 0.07 (0.03) & 0.15 (0.04) & 0.24 (0.03) \\\midrule
Noise = 0.5 & & & \\ \midrule
 MLP & 0.09 (0.01) & 0.16 (0.02) & 0.26 (0.02) & 0.34 (0.03)\\
 IM  & 0.03 (0.01) & 0.04 (0.01) & 0.08 (0.02) & 0.15 (0.02)\\
 ML  & 0.04 (0.01) & 0.06 (0.02) & 0.12 (0.03) & 0.21 (0.03)\\
 EM  & 0.03 (0.01) & 0.04 (0.01) & 0.07 (0.02) & 0.14 (0.02)\\\midrule
Noise = 0.1 & & & \\ \midrule
 MLP  & 0.1 (0.01) & 0.16 (0.03) & 0.26 (0.03) & 0.34 (0.02)\\
 IM   & 0.03 (0.01) & 0.04 (0.01) & 0.08 (0.02) & 0.14 (0.02)\\
 ML   & 0.04 (0.01) & 0.06 (0.02) & 0.13 (0.03) & 0.21 (0.02)\\
 EM   & 0.03 (0.01) & 0.04 (0.01) & 0.08 (0.02) & 0.14 (0.02)\\\bottomrule
\end{tabular}
\end{table}

For a better sense of the different synthetic datasets and each model's ability in recovering the drift, we provide a figure that qualitatively compares the architectures' performances in Figure~\ref{fig:synth_noise_result}.
We additionally show the learnt gradient flow for the Kuramoto model in Figure~\ref{fig:kura_gradient}.
\begin{figure}[h!]
    \centering
    \includegraphics[width=\textwidth]{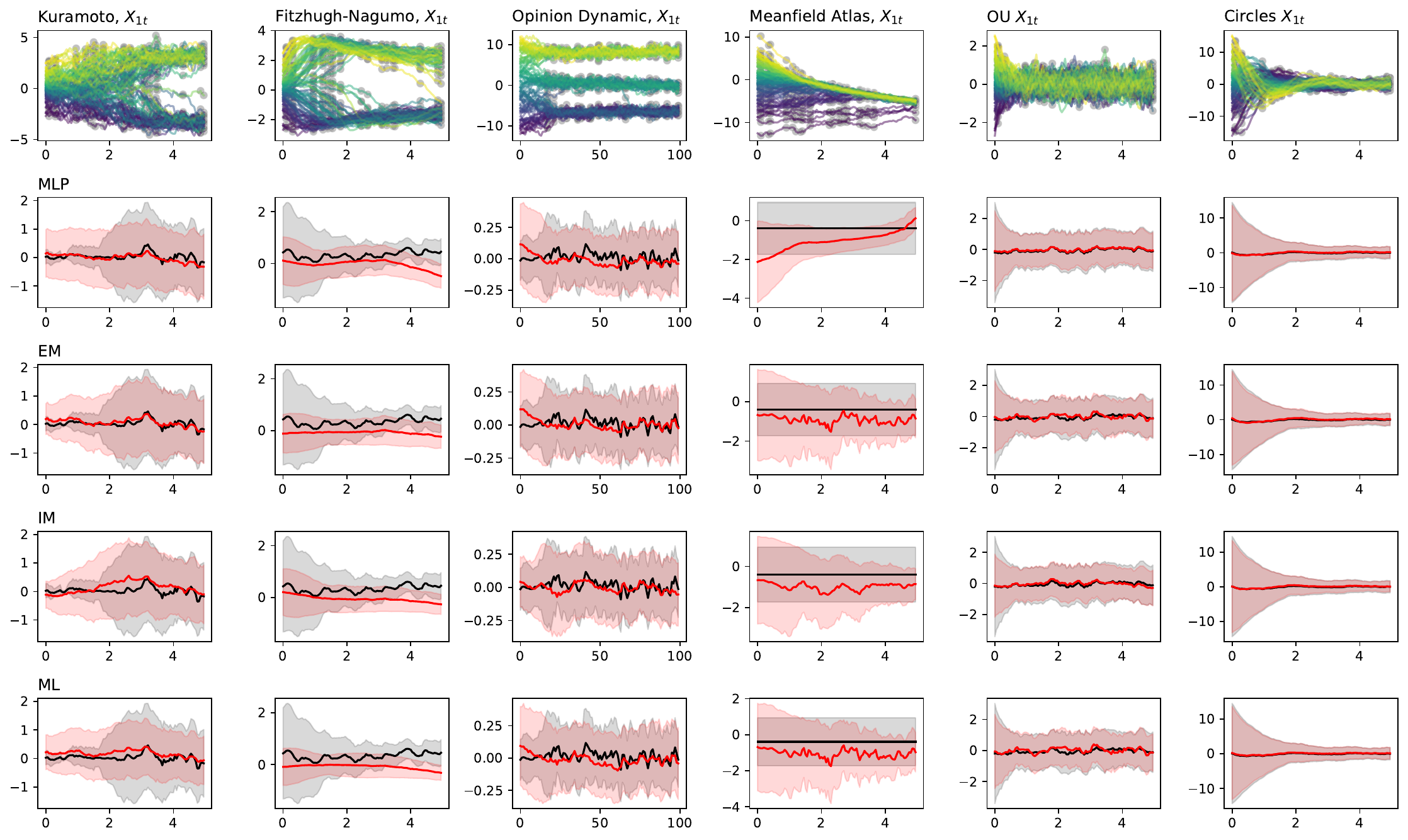}
    \caption{Synthetic data experiments and estimated drifts, only the first dimension is shown. First row: sampled trajectories, grey scattered circles indicate irregular time observations. Rows 2-5: estimated drifts by the MLP (It\^o), EM, IM, ML architectures. Black is truth, red is estimated. The models are trained with additional Gaussian observation noise of SD = 0.1.}
    \label{fig:synth_noise_result}
\end{figure}
\begin{figure*}[h!]
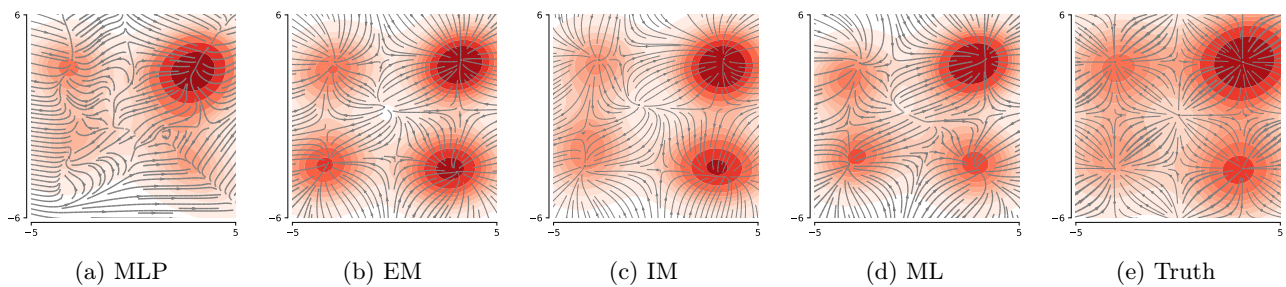

    \centering
     \begin{subfigure}[t]{0.19\textwidth}
         \centering
         \includegraphics[width=\textwidth]{plot/MLP_kuramoto_t=99_Gradient_quiver.pdf}
         \caption{MLP}
         \label{fig:MLP_kura_sup}
     \end{subfigure}
     \hfill
     \begin{subfigure}[t]{0.19\textwidth}
         \centering
         \includegraphics[width=\textwidth]{plot/Xt_kuramoto_t=99_Gradient_quiver.pdf}
         \caption{EM}
         \label{fig:Xt_kura_sup}
     \end{subfigure}
     \hfill
     \begin{subfigure}[t]{0.19\textwidth}
         \centering
         \includegraphics[width=\textwidth]{plot/W0_kuramoto_t=99_Gradient_quiver.pdf}
         \caption{IM}
         \label{fig:W0_kura_sup}
     \end{subfigure}
     \hfill
     \begin{subfigure}[t]{0.19\textwidth}
         \centering
         \includegraphics[width=\textwidth]{plot/NF_kuramoto_t=99_Gradient_quiver.pdf}
         \caption{ML}
         \label{fig:gen_kura_sup}
     \end{subfigure}
     \hfill
     \begin{subfigure}[t]{0.19\textwidth}
         \centering
         \includegraphics[width=\textwidth]{plot/True_kuramoto_t=99_Gradient_quiver.pdf}
         \caption{Truth}
         \label{fig:true_kura_sup}
     \end{subfigure}
        \caption{Estimated gradient flow of Kuramoto Model at terminal time. The colors correspond to the density of generated samples at terminal time. The models are trained with additional Gaussian observation noise of SD = 0.1.} 
        \label{fig:kura_gradient}
\end{figure*}

\clearpage

\subsubsection{Real data experiments}
\label{sec:realTSForecast}
We provide an additional experiment on brain EEG data recorded for both alcoholic and non-alcoholic subject. 
We also extend the time series experiments in the main paper to different types of generation and forecasting. 
For generation, we begin with an initial condition at $T_0$ and generate a trajectory up to $T_{\text{forecast}}$. 
There are two ways to perform forecasting: i) given the initial condition at $T_0$, generate trajectories up to $T_{\text{forecast}}$; ii) given the training terminal condition at $T$, generate trajectories for $t \in [T, T_{\text{forecast}} ]$. In both cases, the dataset time steps are partitioned into 0.8 training, 0.2 forecasting.
We present the numerical results of generation and both types of forecasting in Tables ~\ref{tab:realdataTS_gen}, ~\ref{tab:realdataTS_fore1} and~\ref{tab:realdataTS_fore2}. We note that our methods perform on par with various DeepAR methods under both types of forecasting. We also present qualitative results with the learnt drifts in Figures~\ref{fig:EEG_mean_samples}, ~\ref{fig:Chemo_mean_samples}, and ~\ref{fig:crowd_traj_samples} for the EEG, Chemotaxis data, and Crowd Trajectory.
\begin{table}[h!]
\centering
\caption{Time series generation on held out trajectories. NA/A stands for non-alcoholics/alcoholics. Values in \textbf{bold} and \textit{italic} are best and second best.}
\label{tab:realdataTS_gen}
\begin{tabular}{@{}llll@{}}
\toprule 
& \multicolumn{2}{c}{CRPS $\downarrow$}   & \multicolumn{1}{c}{MSE $\downarrow$} \\ 
                & NA-EEG                  & A-EEG                  & Crowd Traj                 \\ \midrule
MLP (It\^o)     & 5.795 (1.53)            & 4.404 1.593            & 0.939 0.477 \\ 
IM              & 5.179 (1.24)            & \textit{4.163} 1.219   & \textit{0.811} 0.888 \\
ML              & \textbf{5.103} (1.092)  & \textbf{4.052} 1.006   & \textbf{0.51} 0.569 \\
EM              & \textit{5.174} (1.397)  & 4.252 1.348            & 0.877 0.442\\ \midrule
LSTM            & 6.129 (2.237)           & 5.679 2.557            & 1.803 0.373 \\
RNN             & 6.052 (2.286)           & 4.643 1.379            & 1.367 1.066 \\
GRU             & 6.201 (2.219)           & 6.176 2.726            & 1.133 0.318 \\
TR              & 5.986 (1.614)           & 4.295 1.357            & 3.278 1.078          \\\bottomrule
\end{tabular}
\end{table}
\begin{table}[h!]
\centering
\caption{Time series forecasting Type I. NA/A stands for Non-alcoholics/ Alcoholics.\\ \textbf{Bolded} values indicate best performance.}\vspace{5pt}
    \scriptsize
    \label{tab:realdataTS_fore1}
    \begin{tabular}{llllll}
    \toprule 
    & \multicolumn{2}{c}{CRPS $\downarrow$} & \multicolumn{3}{c}{MSE $\downarrow$} \\ \cmidrule(lr){2-6}
                    & NA-EEG                & A-EEG                 & Crowd Traj  & C.Cres                & E.Coli            \\ \midrule
    MLP (It\^o)     &30.087 (32.29)         & 7.837 (3.018)         & 3.502 2.017              & \textbf{0.296} (0.007) & \textbf{0.225} (0.007) \\ 
    IM           &8.346  (4.646)         & 5.438 (1.814)            & 3.458 3.609              & 0.307 (0.010)          & 0.23 (0.006) \\
    ML     &7.967  (4.542)         & 5.652 (1.515)                  & 4.006 6.418              & 0.312 (0.015)          & 0.245 (0.006) \\
    EM           &8.963  (4.309)         & 5.82 (1.818)             & 3.214 1.215              & 0.312 (0.019)          & 0.26 (0.013) \\ \midrule
    LSTM            &7.231  (3.051)         & 6.66 (3.948)          & 2.876 0.804              & 1.526 (0.324)          & 0.786 (0.386) \\
    RNN             &\textbf{6.993} (2.369) & 5.292 (2.317)         & 2.691 2.604              & 1.689 (1.107)          & 0.859 (0.115) \\
    GRU             &7.234  (2.75)          & 7.407 (4.494)         & 2.292 0.515              & 1.115 (0.406)          & 0.813 (0.337) \\
    TR              &7.354  (1.998)         & \textbf{5.122} (2.457)& 5.516 2.207              & 1.489 (0.362)          & 1.489 (0.362) \\\bottomrule
    \end{tabular}
\end{table}

\begin{table}[h!]
\centering
\caption{Time series forecasting Type II. NA/A stands for Non-alcoholics/ Alcoholics.\\ \textbf{Bolded} values indicate best performance.}\vspace{5pt}
    \scriptsize
    \label{tab:realdataTS_fore2}
    \begin{tabular}{lllll}
    \toprule 
    & \multicolumn{2}{c}{CRPS $\downarrow$} & \multicolumn{2}{c}{MSE $\downarrow$} \\ \cmidrule(lr){2-5}
                    & NA-EEG                & A-EEG                  & C.Cres                & E.Coli            \\ \midrule
    MLP (It\^o)     & 31.47 (35.659)          & 6.95  (2.640)        & \textbf{0.013} (0.0003) & \textbf{0.015 (0.0003)} \\ 
    IM           & 8.675 (5.638)           & \textbf{4.884} (1.687)  & 0.014 (0.0007)          & 0.016 (0.0003) \\
    ML     & 8.747 (5.677)           & 4.907 (1.490)                 & 0.015 (0.0007)          & 0.015 (0.0005) \\
    EM           & 8.938 (4.975)           & 5.403 (2.205)           & 0.015 (0.0013)          & 0.015 (0.0005) \\ \midrule
    LSTM            & 8.288 (3.142)           & 6.317 (4.207)          & 0.291 (0.0704)          & 0.163 (0.0353) \\
    RNN             & \textbf{7.002} (2.591)  & 5.296 (2.262)          & 1.455 (0.9367)          & 0.534 (0.191) \\
    GRU             & 7.019 (2.686)           & 6.044 (3.457)          & 0.397 (0.2134)          & 0.17  (0.054) \\
    TR              & 7.087 (2.208)           & 4.971 (2.643)          & 1.65 (0.1666)           & 1.65  (0.1666) \\\bottomrule
    \end{tabular}
\end{table}

\begin{figure}[h!]
    \centering
    \includegraphics[width=0.9\textwidth]{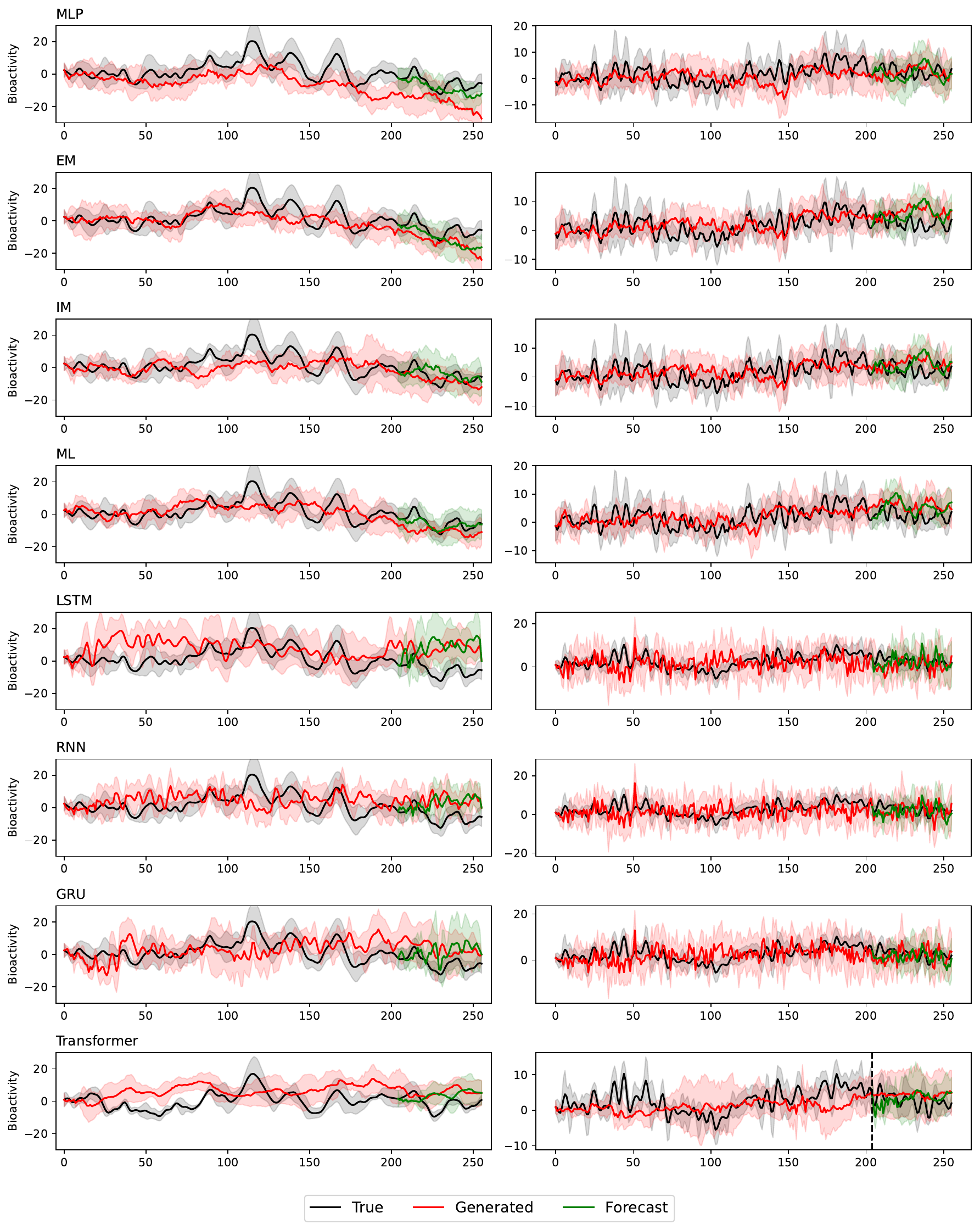}
    \caption{True, generated and forecasted trajectories on EEG dataset. Left: Non-alcoholics; Right: Alcoholics. The dashed vertical line at $t=205$ indicates the start of the forecast.
    The shaded region indicates $\pm$ one standard deviation of samples at each time step.}
    \label{fig:EEG_mean_samples}
\end{figure}
\begin{figure}[h!]
    \centering
    \includegraphics[width=0.9\textwidth]{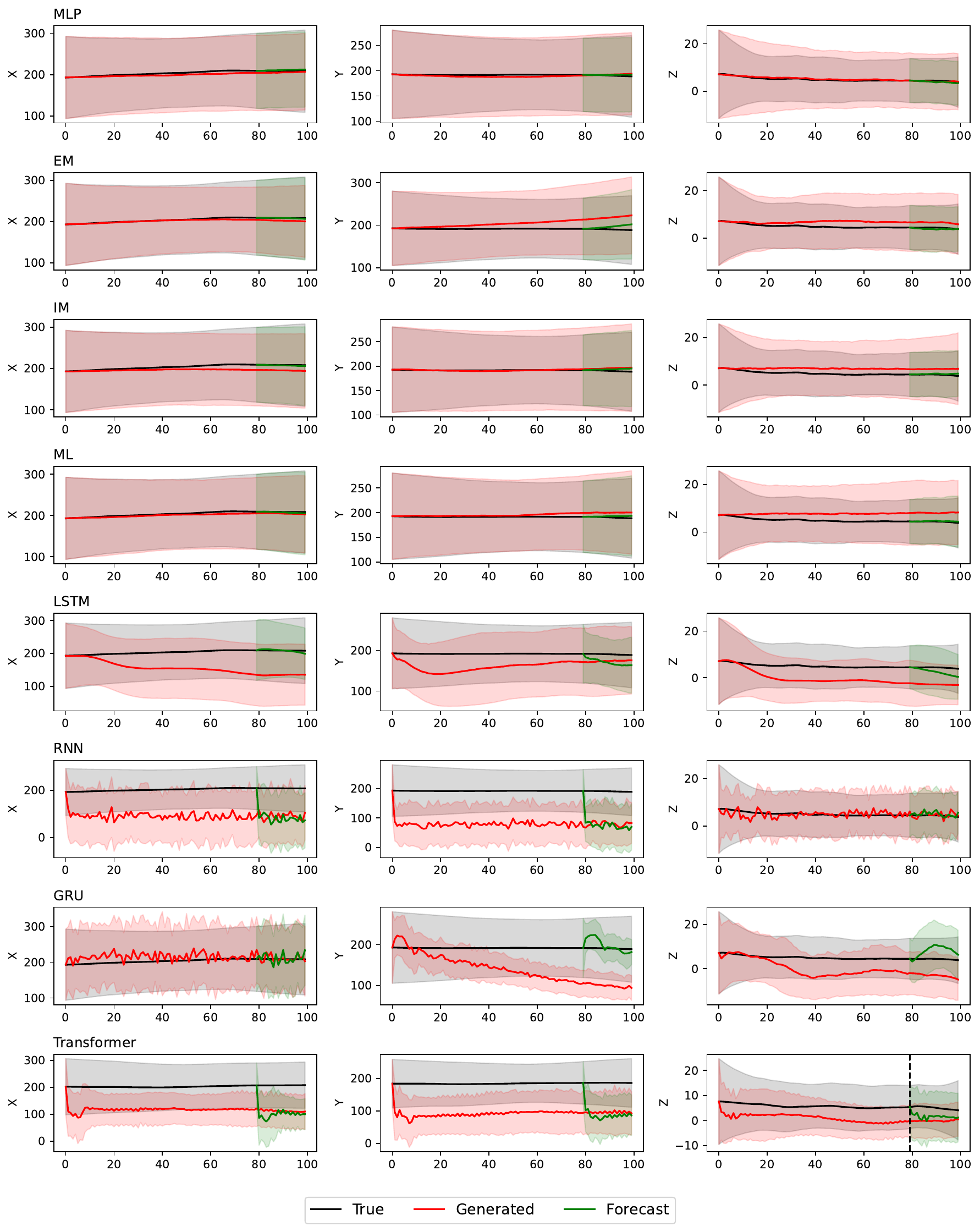}
    \caption{True, generated, and forecasted trajectories for C. Crescentus chemotaxis dataset. The dashed vertical line at $t=80$ indicates the start of the forecast. The shaded region indicates $\pm$ one standard deviation of samples at each time step. From left to right, the columns are movements in $x, y$ and $z$ directions. }
    \label{fig:Chemo_mean_samples}
\end{figure}

\begin{figure}[h!]
    \centering
    \includegraphics[width=0.9\textwidth]{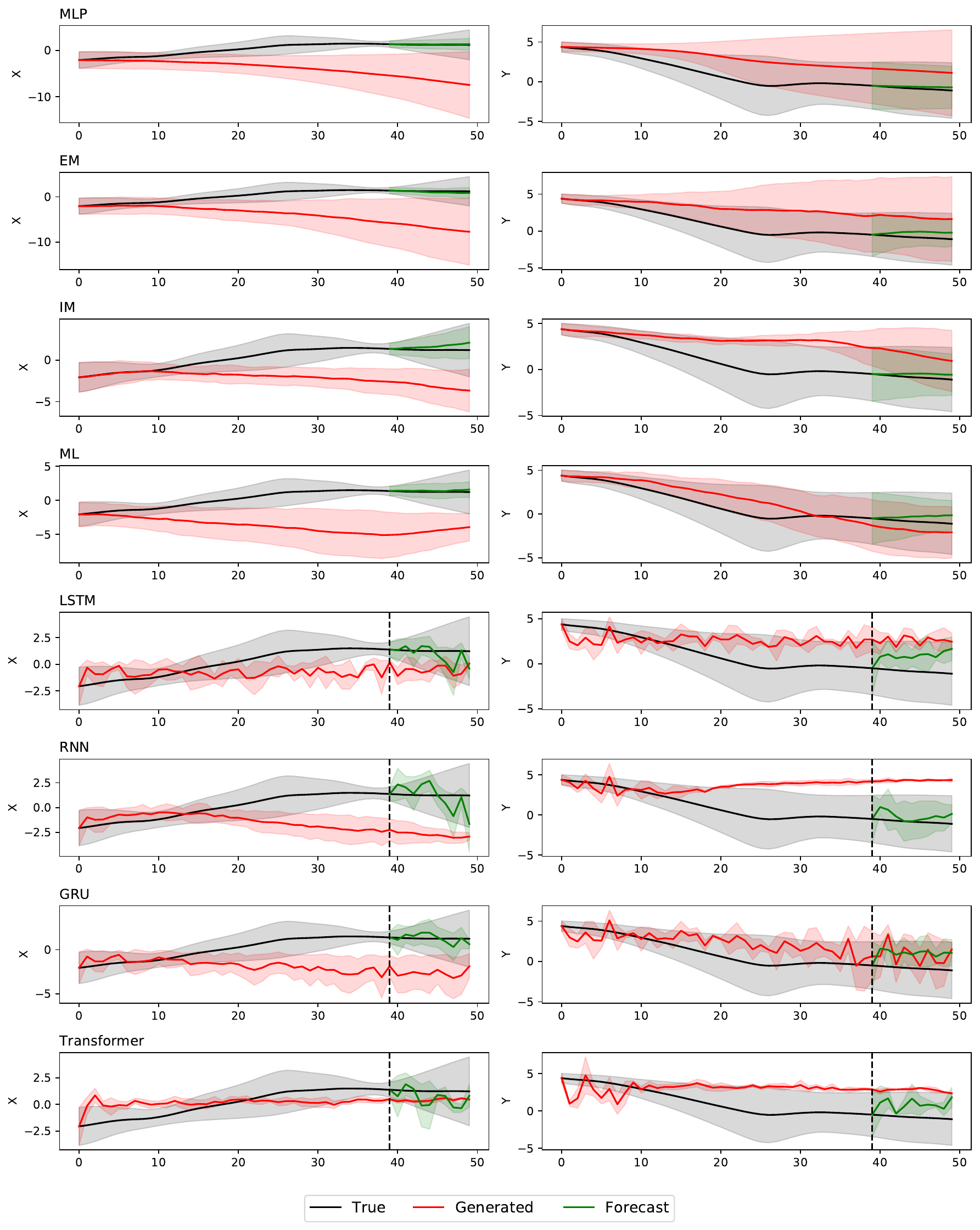}
    \caption{True, generated and forecast trajectories on Crowd Trajectory. The dashed vertical line at $t=39$ indicates the start of the forecast. The shaded region indicates $\pm$ one standard deviation of samples at each time step. From left to right, the columns are movements in $x$ and $y$ directions. }
    \label{fig:crowd_traj_samples}
\end{figure}

\clearpage

\subsubsection{Generative modeling experiments}
\label{sec:more_figs_gen}

Figure~\ref{fig:eightgauss_2d} shows 5 randomly selected 2-d projections of the 100-d mixture of Gaussians. 

\begin{figure}[h!]
    \centering
    \includegraphics[width=\textwidth]{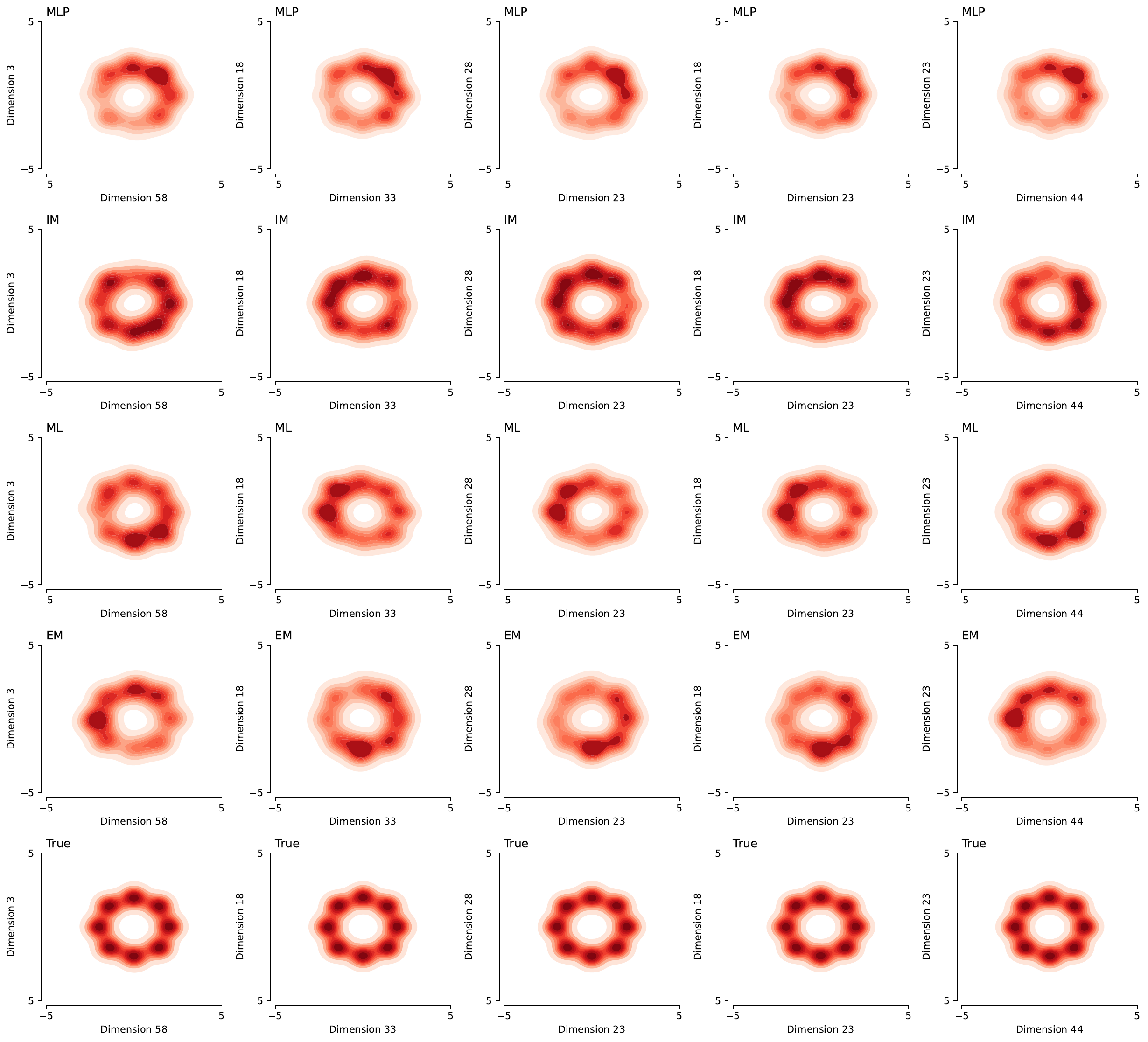}
    \caption{Five randomly selected 2-d projections of 100-d mixture of Gaussians. }
    \label{fig:eightgauss_2d}
\end{figure}

In addition to the eight Gaussian mixture and real data presented in the main paper, we present a few toy generative modeling experiments to better understand the different architectures.
\begin{figure}[h]
    \centering
     \begin{subfigure}[t]{.8\textwidth}
         \centering
         \includegraphics[width=\textwidth]{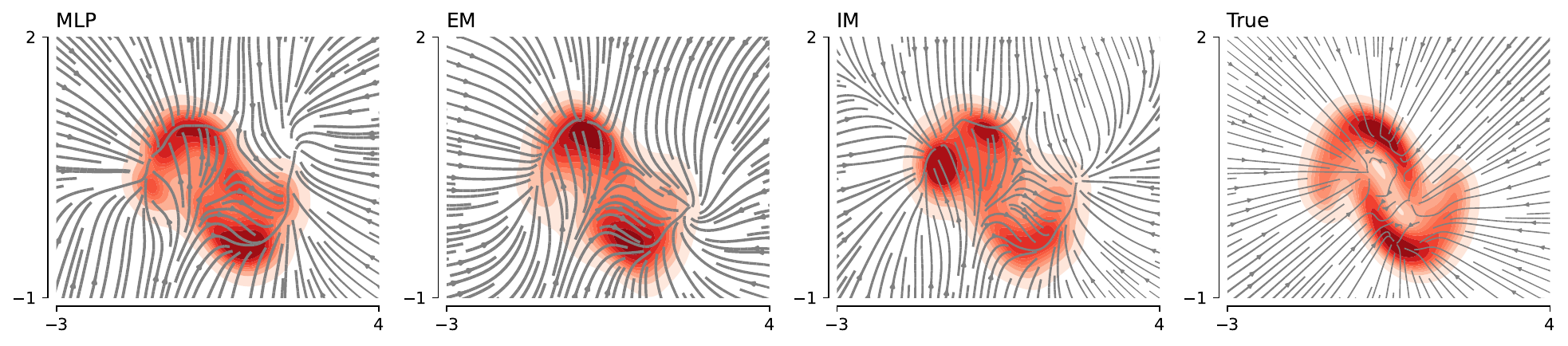}
         \caption{Two Moons}
         \label{fig:moons}
     \end{subfigure}
     \hfill
     \begin{subfigure}[t]{.8\textwidth}
         \centering
         \includegraphics[width=\textwidth]{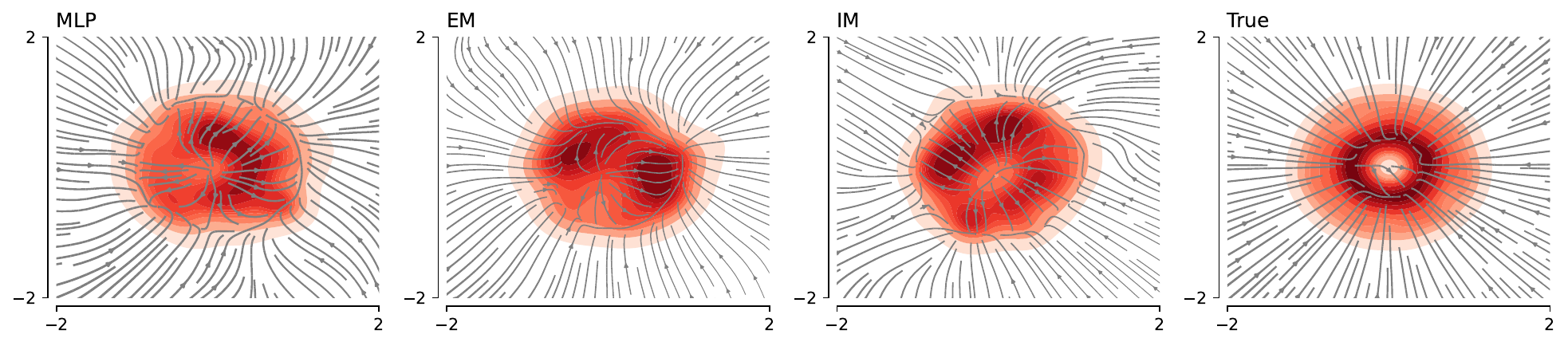}
         \caption{Two Circles}
         \label{fig:circles}
     \end{subfigure}
     \hfill
     \begin{subfigure}[t]{.8\textwidth}
         \centering
         \includegraphics[width=\textwidth]{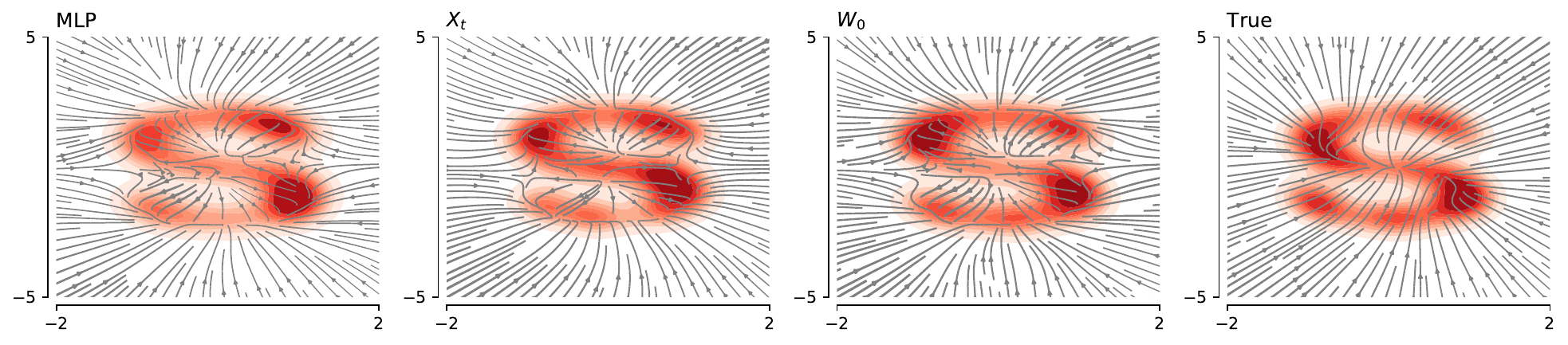}
         \caption{S Curve, 2-$d$}
         \label{fig:s}
     \end{subfigure}
     \hfill
     \begin{subfigure}[t]{.8\textwidth}
         \centering
         \includegraphics[width=\textwidth]{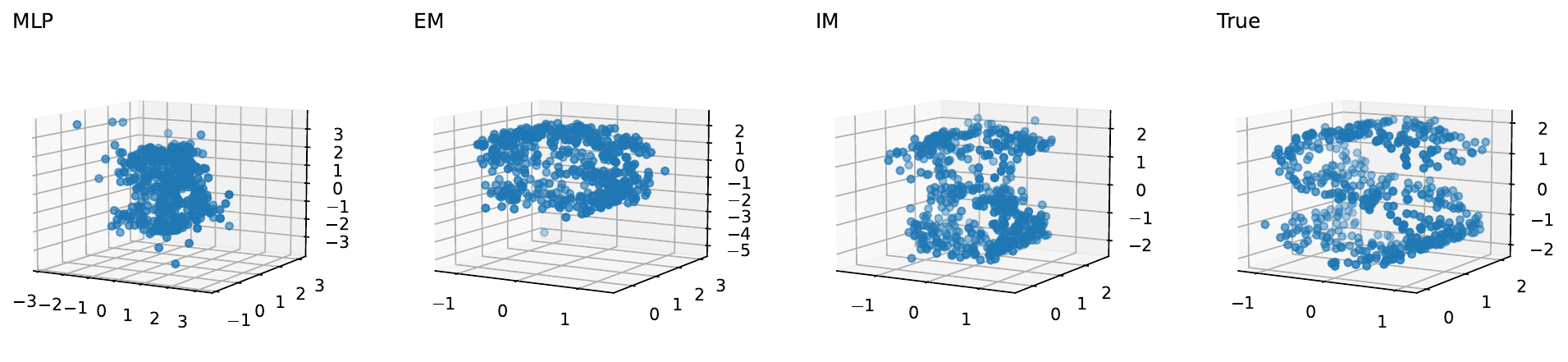}
         \caption{S Curve, 3-$d$}
         \label{fig:3d-s}
     \end{subfigure}
     \begin{subfigure}[t]{.8\textwidth}
         \centering
         \includegraphics[width=\textwidth]{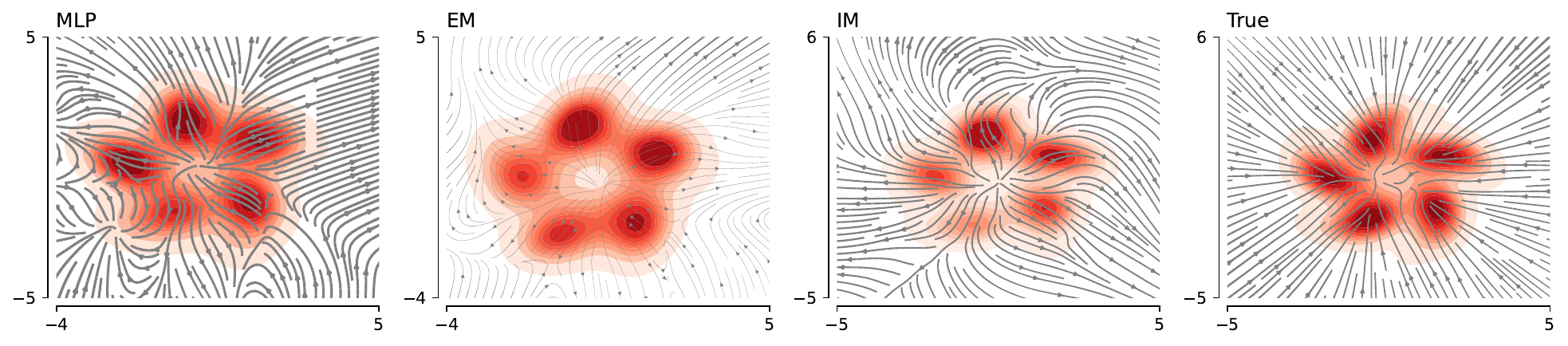}
         \caption{Pinwheel}
         \label{fig:swissroll}
     \end{subfigure}
     \hfill
     \begin{subfigure}[t]{.8\textwidth}
         \centering
         \includegraphics[width=\textwidth]{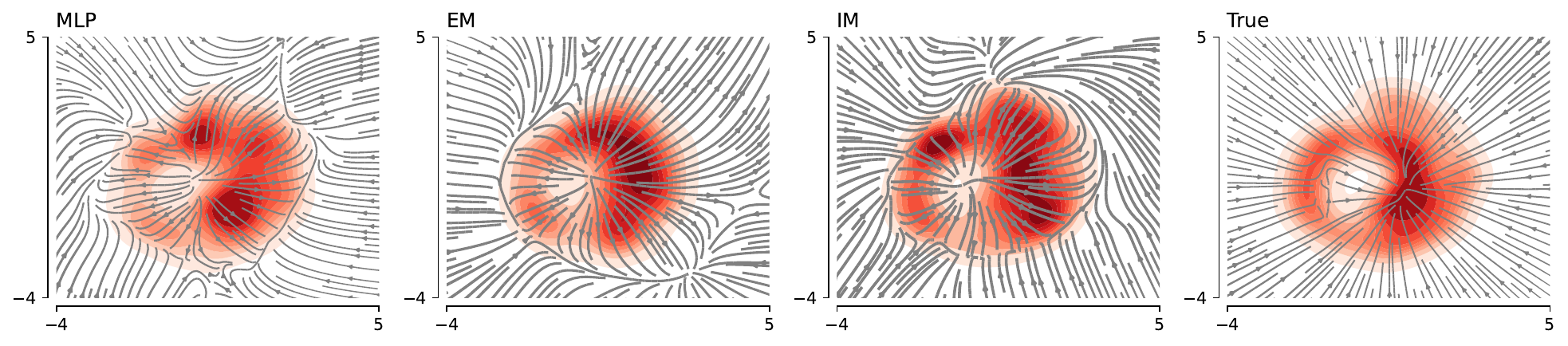}
         \caption{Swissroll}
         \label{fig:pinwheel}
     \end{subfigure}
        \caption{Estimated gradient flow at terminal time. The colors correspond to the density of generated samples at terminal time. 
        From left to right: MLP, EM, IM, true. From top to bottom: two moons, two circles, S-curve 2-d, S-curve 3-d, pinwheel, swissroll. The architectures were trained with the Brownian bridge estimator.} 
    \label{fig:generative_result_BB}
\end{figure}

\subsubsection{Additional experiments: linear Fokker–Planck}
\label{sec:fk}

For this set of experiments, the set up is similar to the generative modeling experiments detailed in Section~\ref{sec:gendata} where we map between a Gaussian distribution and target distributions of two moons, two circles, s-curves, and 3-dimensional s-curves. However, we consider a linearization of the PDE that governs the density and derive a likelihood for the target distribution based on the linearized PDE. We are mainly interested in the performance differences due to differences in architectures between MLP (It\^o), IM, and ML. A similar framework was considered in~\citet{huang2021variational} with respect to score-based generative models. We first derive the estimation procedure then show the results.

It is known that the flow satisfies the Fokker-Planck equation given by
\begin{equation}
\partial_t p_t = - \mathrm{div}(b(x,p_t,t) p_t(x)) + \frac{\sigma^2}{2} \nabla^2 p_t(x).
\label{eq:fk2}
\end{equation}
Falling back on the It\^o-SDE, where $b$ does not depend on $p_t$, the PDE is linear. We can then consider using the Feynman-Kac formula where the solution to~\eqref{eq:fk2} with $b$ independent of $p_t$ can be computed according to an expectation over sample paths $X_t$ that satisfy $\mathrm{d}X_t = b(\cdot) \mathrm{d} t + \sigma \mathrm{d}W_t$ such that
$$
p_T(x) = \mathbb{E}\left[ \exp \left ( \int_0^T - \mathrm{div} b( \cdot ) \, \mathrm{d} t\right) p_0(X_T) \; \bigg | \; X_0 = x \right ].
$$
We use Girsanov's theorem to transform the expectation over sample paths with drift to an  expectation under Brownian motion, i.e. over sample paths $X_t$ that satisfy $\mathrm{d}X_t = \sigma \mathrm{d}W_t$ with no drift and
$$
p_T(x) = \mathbb{E}\left[ \exp \left ( \int_0^T - \mathrm{div} b( \cdot ) \, \mathrm{d} t\right) p_0(X_T) \exp \left( \int_0^T b( \cdot ) \,\mathrm{d} X_t - \frac12 \int_0^T b^2(\cdot) \, \mathrm{d}t \right) \; \bigg | \; X_0 = x \right ].
$$
leading to an efficient Monte Carlo method for computing the probability. To maximize this likelihood, we can use Jensen's inequality to derive an ELBO which we optimize as
$$
\log p_T(x) \geq \mathbb{E}\left[   \int_0^T - \mathrm{div} b( \cdot ) \, \mathrm{d} t   + \log(p_0(X_T)) + \int_0^T b( \cdot ) \, \mathrm{d} X_t - \frac12 \int_0^T b^2(\cdot) \, \mathrm{d}t \; \bigg | \; X_0 = x \right ].
$$
The integrals are approximated using the forward Euler method and the parameters of $b$ are optimized for the set of observations. The results are given in Table~\ref{tab:linearfkGen}.
The results suggest that the proposed architectures do not decrease performance in the linear setting and sometimes provide slight improvements. 

\begin{table}[h]
    \centering
    \resizebox{\textwidth}{!}{%
    \footnotesize
    \begin{tabular}{lllllll}
                & \textsc{Two Moons}      & \textsc{Two Circles}      & \textsc{S Curve 2d}      &\textsc{S Curve 3d}     & \textsc{Pinwheels}     & \textsc{Swissroll} \\ \toprule
    MLP (It\^o) & 38.122 (0.517)          & 33.738 (0.150)            & \textbf{54.083} (0.600)   & \textit{72.045} (0.688) & 72.333 (1.018)           &   69.345 (0.717)                 \\ 
    IM       & \textbf{37.356} (0.323) & \textbf{33.160} (0.371)   & 54.098 (0.645)            & \textbf{72.013} (0.645) &\textit{71.318} (0.985)  &  \textbf{69.000} 0.550         \\
    EM       & \textit{37.793} (0.307) & \textit{33.319 (0.264)}   & \textit{54.089} (0.607)   & 72.636 (0.600)          & \textbf{70.692 (3.596)}   & \textit{69.230} 0.546               \\ \bottomrule
    \end{tabular}
    }
    \vspace{5pt}\caption{Density estimation through linear Fokker-Planck training: ELBO between true samples and generated samples. \textbf{Bolded} values and \textit{italic} values are best and second best correspondingly.\label{tab:linearfkGen}}

\end{table}

\section{Intuition behind the proposed architectures}\label{sec:im_em_intuition}
Here we describe a few details regarding the architectures and how they can be interpreted.
The main idea is each representation has its own set of implicit biases that impart different properties on the learned drift. 

\subsection{Intuition behind the IM architecture}
\label{sec:im_intuition}
The IM architecture has two key components: the mean field layer, which acts as a weight sharing mechanism; and, the change of measure, which reweighs the shared weight.
The main idea is that by using the weight sharing component, learning the correct drift becomes ``easier'', leading to better estimates of the drift.
We represent the shared weight as a matrix $W_0$ which is of dimensions $\mathbb{R}^{K \times d}$ where $K$ is the width of the mean-field layer. 
We represent the interaction function $\varphi$ as a neural network $\varphi_\theta$.
To get an understanding how how this looks like in the stationary case (that is, $p_t = p_\star$ for all $t$), take the following illustration:
\begin{align*}
    X_t^{(i)} = \begin{bmatrix}
    x_1 \\
    x_2 \\
    \vdots \\
    x_d
    \end{bmatrix} \in \mathbb{R}^d, \quad W_0 = \begin{bmatrix}
        w_{1,1} & w_{1,2} & \cdots & w_{1,d} \\
        \vdots & \ddots & \ddots & \vdots \\
        w_{K,1} & w_{K,d} & \cdots & w_{K,d}
    \end{bmatrix} \\
    \mathrm{MF}_{(K)}(\varphi_\theta(X_t^{(i)})) = \frac1K \sum_{i=1}^K \varphi\left(\begin{bmatrix}
    x_1 \\
    x_2 \\
    \vdots \\
    x_d
    \end{bmatrix} , \begin{bmatrix}
    w_{i,1} \\
    w_{i,2} \\
    \vdots \\
    w_{i,d}
    \end{bmatrix}\right)
\end{align*}
Note that as $K\to\infty$ then this expectation can correspond to the true expectation. 
We then represent the change of measure by another neural network, $\lambda_\theta(\cdot)$.
$\lambda_\theta(\cdot)$ takes as input the weights $W_0$ and time $t$ to output the re-weighted measure.
\begin{align*}
    \mathrm{MF}_{(K)}(\varphi(X_t^{(i)}) = \frac1K \sum_{i=1}^K \varphi_\theta \left(\begin{bmatrix}
    x_1 \\
    x_2 \\
    \vdots \\
    x_d
    \end{bmatrix} , \lambda_\theta \left (\begin{bmatrix}
    w_{i,1} \\
    w_{i,2} \\
    \vdots \\
    w_{i,d}
    \end{bmatrix}, t \right ) \begin{bmatrix}
    w_{i,1} \\
    w_{i,2} \\
    \vdots \\
    w_{i,d}
    \end{bmatrix}\right)
\end{align*}

To provide more intuition on how the mean-field layer is implemented in PyTorch, it requires the \verb|repeat_interleave| operation to implement the weight sharing.
The following code snippet illustrates the module in more detail:
\begin{minted}{python}
class MeanFieldModule(nn.Module):
    def __init__(self, d):
        self.varphi = MLP(d * 2, WIDTH, DEPTH, d)  # initialize the interaction kernel network
        self.w0     = torch.randn(WIDTH_W, d)      # initialize the shared weight
        
    def forward(self, x):
        x_prime   = x.repeat_interleave(WIDTH_W, dim=0) # mean-field weight sharing
        in_varphi = torch.cat((x_prime, self.w0),dim=-1)
        B_x = self.varphi(in_varphi)                    # change of measure in self.varphi
\end{minted}

Note that the change of measure is implicit in the parameterization of \verb|varphi|.
We can adjust this as necessary, for example using a separate network that applies the change of measure to the \verb|w0| parameter. 
Having introduced this parameterization, we note that the interesting behavior comes from the implicit regularization.  

\subsection{Intuition behind the ML architecture}
The main idea behind the ML architecture is that if we explicitly represent the marginal distributions according to some density estimator, we can leverage the regularity of the density over different time steps to improve the estimation. 
We know that the flow should satisfy a nonlinear Fokker-Planck equation, so by constraining the class of densities such that they satisfy this equation, we can possibly do better than just using the empirical measure.
Taking an expectation with respect to the empirical measure --- particularly in cases where it may be corrupted by noise --- can lead to an incorrect drift.
This is particularly important when one is dealing with scientific data which may be noisy. 

Instead, we parameterize a time dependent density and ensure there is an agreement between the estimated drift and the estimated density.

\end{document}